\newtheorem{theorem}{Theorem}
\newtheorem{lemma}{Lemma}
\newtheorem{assumption}{Assumption}
\newcommand{\tp}{\intercal}
\newcommand{\mc}{\mathcal}
\newcommand{\mb}{\mathbb}
\newcommand{\mcs}{\mathcal{S}}
\newcommand{\mbr}{\mathbb{R}}
\newcommand{\mbz}{\mathbb{Z}}
\newcommand{\mbe}{\mathbb{E}}
\newcommand{\mcp}{\mathcal{P}}
\newcommand{\mdo}{\mathds{1}}
\newcommand{\optcbg}{\texttt{OPT}_{\texttt{CBG}}}
\newcommand{\optbwk}{\texttt{OPT}_{\texttt{BwK}}}
\newcommand{\multiline}[1]{%
  \begin{tabularx}{\dimexpr\linewidth-\ALG@thistlm}[t]{@{}X@{}}
    #1
  \end{tabularx}
}
\def\blfootnote{\gdef\@thefnmark{}\@footnotetext}
\title{Online Learning in Budget-Constrained Dynamic Colonel Blotto Games}
\date{} 					
\author{{Vincent Leon}\\
	Department of Industrial and Enterprise Systems Engineering\\
	and Coordinated Science Laboratory\\
	University of Illinois at Urbana-Champaign\\
	Urbana, IL 61801, USA \\
	\texttt{leon18@illinois.edu} \\
	\And
	{S.~Rasoul Etesami}\\
	Department of Industrial and Enterprise Systems Engineering\\
	and Coordinated Science Laboratory\\
	University of Illinois at Urbana-Champaign\\
	Urbana, IL 61801, USA \\
	\texttt{etesami1@illinois.edu} \\
}
\begin{document}
\maketitle

\begin{abstract}
In this paper, we study the strategic allocation of limited resources using a Colonel Blotto game (CBG) under a dynamic setting and analyze the problem using an online learning approach. In this model, one of the players is a learner who has limited troops to allocate over a finite time horizon, and the other player is an adversary. In each round, the learner plays a {\color{black} one-shot} Colonel Blotto game with the adversary and strategically determines the allocation of troops among battlefields based on past observations. The adversary chooses its allocation {\color{black} action} randomly from some fixed distribution that is unknown to the learner. The learner's objective is to minimize its regret, which is the difference between the {\color{black} cumulative reward} of the best mixed strategy and the realized {\color{black} cumulative reward} by following a learning algorithm while not violating the budget constraint. The learning in dynamic CBG is analyzed under the framework of combinatorial bandits and bandits with knapsacks. We first convert the budget-constrained dynamic CBG to a path planning problem on a directed graph. We then devise an efficient algorithm that combines a special combinatorial bandit algorithm for path planning problem and a bandits with knapsack algorithm to cope with the budget constraint. The theoretical analysis shows that the learner's regret is bounded by a term sublinear in time horizon and polynomial in other parameters. Finally, we justify our theoretical results by carrying out simulations for various scenarios.
\end{abstract}

\keywords{Colonel Blotto game \and dynamic games \and game theory \and multi-armed bandits \and online learning \and regret minimization}

\section{Introduction}\label{sec:intro}

Colonel Blotto game (CBG) is a classical model of game theory for strategic resource allocation. It was firstly introduced in \cite{borel1921theorie,borel1953theory} and discussed in \cite{frechet1953emile,frechet1953commentary,vonneumann1953communication}. It is a two-player static zero-sum game in which two colonels compete by allocating a limited number of troops over multiple battlefields. Each battlefield has a weight. The colonel who assigns more troops to a battlefield wins that battlefield, and the payoff for a colonel is the sum of the weights of the battlefields that it won. 

Imagine that two companies are going to release new alternative products and compete with each other in the same markets. A common strategy is to advertise the product to some users in the initial stage (e.g., by providing free samples) so that the product becomes popular and possibly dominates the market through ``word-of-mouth'' marketing. When two products are comparable, a user may prefer the product that is better advertised. A company that can get more initial users to adopt its product (especially influential users with high social connectivity) is likely to become the market leader and gain more profit. In this example, the initial users can be regarded as battlefields in the CBG model, and the weights of battlefields can be viewed as a measure of users' social influence. The advertising budgets or the free samples are troops to be strategically allocated. 

As another example, one can consider the United States presidential election. Most states have established a winner-take-all system. Two presidential candidates should properly allocate their campaign resources over different states to win the majority of the votes. Here, one can view the states as the battlefields and the campaign resources as the troops. The number of electoral votes obtained by winning different states can be viewed as the weight of different battlefields. The above scenarios are only two specific examples of CBG, and one can consider many other applications such as cyber-physical security \cite{etesami2019dynamicgames,gupta2014security,min2018apt,guan2020security,labib2015colonel,zhang2022safeguarding}, wireless spectrum allocation \cite{hajimirsadeghi2016internetwork,hajimirsaadeghi2017spectrum}, economics \cite{kovenock2012coalitional}, and political elections \cite{laslier2002distributive,laslier2002how,thomas2018political}.

There are abundant existing works which investigate the Nash equilibria of CBG in discrete and continuous forms under both static and dynamic settings \cite{etesami2019dynamicgames}, for example, \cite{gross1950continuous,roberson2006colonelblotto,ahmadinejad2016fromduels,behnezhad2017faster,behnezhad2022fast,vu2018efficient}. 
However, obtaining Nash equilibrium in either static or dynamic CBG is a challenging task as the players need to know their opponent's budget in order to compute their equilibrium strategies \cite{roberson2006colonelblotto}.\footnote{A static game is a one-shot game where all players act simultaneously. A game is dynamic if players are allowed to act multiple times based on the history of their strategies and observations.} Unfortunately, in many situations, having access to such information is either infeasible or unrealistic. In particular, a player may not necessarily follow the Nash equilibrium dynamics but rather act adversarially by choosing random strategies. Therefore, the other player has to play while learning the adversary's strategies and act strategically. The dynamic CBG that we consider in this paper can be cast as a \emph{multi-armed bandits} (MAB) problem which models a sequential interaction between a \emph{learner} and an \emph{adversary} (or \emph{environment}) \cite{lattimore2020book}. A game is played over a finite time horizon. In each period, the learner selects an \emph{action} (or \emph{arm}) from its action set and receives a reward after interacting with the adversary or the environment. The learner aims to maximize the total reward while learning the game through repeated interactions with the environment and observing the history of outcomes. In the dynamic CBG, an allocation of troops over battlefields represents an arm. The learning player follows an online learning algorithm to minimize its \emph{regret}, which is defined as the difference between the {\color{black} expected cumulative reward obtained by following its best-in-hindsight mixed strategy and the realized cumulative reward obtained by following the learning algorithm (see Section \ref{sec:model} for a formal definition).} Moreover, we consider the dynamic CBG game under a budget constraint. The reason is that in real applications, such as repeated cyberattacks or political elections, the players often have limited security resources or advertising budgets \cite{etesami2019dynamicgames,etesami2021open}.

In this paper, we consider a problem where two players are engaged in a repeated CBG. One player is the learner who aims to maximize its cumulative reward by deciding how to allocate its {\color{black} troops to} the battlefields over a finite time horizon {\color{black} subject to limited budget. The total number of troops allocated by the learner over the entire time horizon cannot exceed its budget. In each stage, the learner has to simultaneously determine the number of troops allocated to that stage and their distribution over the battlefields. The other player is the adversary who chooses its allocation strategies from a fixed distribution unknown to the learner. Therefore, the learner has to learn the adversary's allocation strategy and act accordingly while playing the game. There are two major issues associated with this problem. First, since the learner faces a limited budget, excessive allocation of troops to a single stage may lead to early termination of the dynamic game, which results in large regret. Second, learning the adversary's strategy and estimating the optimal strategy may be computationally expensive because the number of possible allocations is exponentially large. These two issues will be tackled by our algorithm, which makes efficient use of the combinatorial structure of CBG while respecting the budget constraint.
} 

\subsection{Related Work}\label{subsec:related-work}

The Nash equilibrium and equilibrium strategies of CBG have been extensively studied in the past literature. On the one hand, the Nash equilibrium of continuous CBG where resources are infinitely divisible was first studied in \cite{gross1950continuous}. The authors provide a solution to the continuous version when two colonels have the same number of troops. \cite{roberson2006colonelblotto} characterizes the equilibrium strategy of the continuous CBG where the colonels can have any number of troops but all battlefields have the same weights (i.e., homogeneous battlefields). On the other hand, characterizing the optimal strategy for the discrete version of CBG is a difficult problem because the number of allocation strategies grows exponentially in the number of troops and battlefields. \cite{ahmadinejad2016fromduels} proposes an algorithm based on a linear program (LP) with an exponential number of constraints to compute the Nash equilibrium of the discrete CBG. They also make clever use of the ellipsoid method to compute the equilibrium strategies in polynomial time. More recent work \cite{behnezhad2017faster,behnezhad2022fast} provides a polynomial-size LP for finding the equilibrium strategies of discrete CBG. Moreover, the authors of \cite{vu2018efficient} propose a more efficient algorithm to compute an approximate Nash equilibrium. Other variants and generalizations of CBG are also proposed and studied in the past literature, such as generalized CBG \cite{ferdowsi2018generalized}, in which the discontinuous battlefields reward functions are smoothed by an inverse tangent function, and the extended versions of CBG in \cite{hortala2012pure,kovenock2021generalizations}, where the colonels are allowed to have different valuations over battlefields, and hence the game is generalized to a general-sum game.

All the above work focuses on static CBG, which is a single-shot game. In a dynamic CBG with a budget constraint, the players act strategically based on the observed history repeatedly over a finite time horizon, and the allocation strategies across time periods are essentially correlated and coupled.
The dynamic CBG can be viewed as a two-level problem. On the upper level, the learner is faced with an optimization problem for the distribution of budget over the time horizon. On the lower level, in each round, the learner must play a one-shot CBG to strategically distribute the designated budget for that round over the battlefields. 

Our work is also related to \cite{hajimirsadeghi2016internetwork,hajimirsaadeghi2017spectrum,vu2019combinatorial}. \cite{hajimirsadeghi2016internetwork} studies the spectrum allocation problem using a CBG model and proposes a learning algorithm based on fictitious play to compute the Nash equilibrium of static CBG numerically. However, the authors do not consider the playing-while-learning scenario, and the purpose of the learning algorithm is numerical convergence rather than strategic action. The essence of \cite{hajimirsadeghi2016internetwork} is still a static game.
In \cite{hajimirsaadeghi2017spectrum}, the authors study a dynamic CBG model for spectrum resource allocation problem where two network service providers offer limited spectrum resources to multiple users over a finite time horizon. The problem is formulated as a discrete-time dynamic game, and the saddle-point equilibrium strategies are found under some technical assumptions. However, \cite{hajimirsaadeghi2017spectrum} does not involve any learning dynamics and only considers the upper-level model ignoring the lower-level strategy. Authors in \cite{vu2019combinatorial} propose an online learning algorithm for repeated CBG where the learning player learns the history of actions and rewards and acts strategically. Different from our model, they do not impose a total budget constraint; instead, the per-round budget of the learner is controlled. \cite{vu2019combinatorial} ignores the upper-level budget allocation because the unused budget in one stage will not be accumulated to the subsequent stages. Therefore, we will use new ideas to incorporate the budget constraint into our game dynamics.

The problem we consider in this paper also falls under the framework of the MAB problem. More specifically, it lies in the intersection of \emph{combinatorial bandits} (CB) and \emph{bandits with knapsacks} (BwK) because of the combinatorial structure of CBG and the constraint on the total budget. The CB was introduced in \cite{cesabianchi2012}. They propose a so-called \textsc{ComBand} multiplicative weight algorithm for CB under the bandit setting and show a regret bound of  $O(\sqrt{T \ln S})$ where $T$ is the time horizon, and $S$ is the number of actions that may be exponentially large. \cite{combes2015combinatorial} improves the time complexity of the algorithm while maintaining the same regret bound; it also provides an improved performance guarantee in some special cases. \cite{vu2019combinatorial} extends the algorithm \textsc{ComBand} to repeated CBG without budget constraint and obtains the same performance guarantee while maintaining the algorithm's efficiency. They use a clever way of transforming the CBG model into a directed graph and reducing the size of the problem to a polynomial size. 
BwK is an MAB model with budget constraint, first introduced by \cite{badanidiyuru2013bwkabstract} and subsequently studied in \cite{agrawal2014bandits,agrawal2016linear,agrawal2016efficient,immorlica2020lagrangebwk,li2021symmetry}. \cite{badanidiyuru2013bwkabstract} introduces the BwK model and proposes two LP-based algorithms for stochastic BwK that solve the online problem optimally up to a logarithmic factor. \cite{agrawal2014bandits} studies the BwK problem where a more general notion of reward is used, and the hard budget limit is softened. \cite{agrawal2016linear,agrawal2016efficient} study a special case of BwK named contextual BwK in which the learner receives an additional context in each round. \cite{immorlica2020lagrangebwk} proposes an LP-based algorithm named \texttt{LagrangeBwK} for stochastic BwK and obtains a near-optimal performance guarantee on regret. The algorithm \texttt{LagrangeBwK} is based on the Lagrangian relaxation and the dual program of a linear program. It can be viewed as a ``black box'' that uses two problem-specific online learning algorithms as subroutines to compete with each other and converge to the Nash equilibrium of the game induced from the Lagrangian function of the LP (see Section \ref{subsec:prelim-LagrangeBwK} for more details). The recent work \cite{li2021symmetry} also proposes a primal-dual based algorithm for stochastic BwK, which achieves a problem-dependent logarithmic regret bound.

\subsection{Contributions and Organization} 

In this paper, we {\color{black} study the problem of dynamic CBG where the learner faces a budget constraint} and develop an online learning algorithm for the {\color{black} learner in the} dynamic CBG. {\color{black} The algorithm} achieves a regret bound sublinear in time horizon $T$ and polynomial in other parameters. 
{\color{black} To deal with the issue of limited total budget,} we use a tailored version of {\color{black} an algorithm named} \texttt{LagrangeBwK}, {\color{black} which is designed for MAB with knapsack constraints.}
{\color{black} To deal with the time complexity issue due to exponentially large allocation action set,} we transform the CBG model to a path planning problem on a directed graph {\color{black} and apply an efficient CB algorithm named \textsc{Edge} from \cite{vu2019combinatorial}}, which is originally intended for the repeated CBG without total budget constraint. {\color{black} This technique effectively utilizes the combinatorial structure of CBG and achieves polynomial time complexity.}
To extend that result to dynamic CBG with total budget constraint, we provide a revised directed graph for path planning problem and use the algorithm \textsc{Edge} as a subroutine of {\color{black} the algorithm \texttt{LagrangeBwK} to obtain a provable regret bound.}

The paper is organized as follows. In Section \ref{sec:model}, we formally introduce the dynamic CBG model {\color{black} and the learning objectives}. In Section \ref{sec:prelim}, we provide preliminary results for two essential algorithms, namely \texttt{LagrangeBwK} and \textsc{Edge}. In Section \ref{sec:main-results}, we present our devised algorithm \texttt{LagrangeBwK-Edge} for the dynamic CBG together with the regret analysis. Simulation results are presented in Section \ref{sec:simulation}. We conclude the paper in Section \ref{sec:conclusion}. 

\noindent
{\bf Notation:} Throughout the paper, we use bold fonts to represent vectors (e.g., $\bm{x}$) and subscript indices to denote their components (e.g., $x_i$). We use $\bm{x}^{\tp}$ to denote the transpose of vector $\bm{x}$. For any positive integer $k$, we let $[k] \triangleq \{1, \cdots, k\}$. {\color{black}Finally, we use $\mdo\{\cdot\}$ to represent the indicator function, i.e., $\mdo\{A\} = 1$ if event $A$ happens, and $\mdo\{A\} = 0$ otherwise.}


\section{Problem Formulation} \label{sec:model}


We consider a {\color{black} dynamic game} with two players: one learner and one adversary. There are $T\in \mathbb{Z}_+$ rounds, where at each round $t \in [T]$, the two players play a static one-shot CBG {\color{black} described as follows}. The static CBG consists of $n$ battlefields, and each battlefield $i\in \left[n\right]$ has fixed weight $b_i > 0$. The weights of all battlefields are not necessarily identical, but the sum of weights is normalized to $1$, i.e., \(\sum_{i=1}^n b_i = 1\). Also, the weights of battlefields $\{b_i\}_{i=1}^n$ are not necessarily known to the learner. The players choose {\color{black} an allocation action} by determining the {\color{black} number of troops} allocated to each battlefield without knowing the opponent's {\color{black} action}. Without loss of generality, we assume that the {\color{black} number of troops} distributed to each battlefield is integer-valued. A player wins a battlefield if it allocates more {\color{black} troops} to that battlefield than its opponent and receives a reward equal to that battlefield's weight. If there is a tie, the weight is shared equally between two players. Finally, the total reward of a player equals the sum of its rewards {\color{black} over} all battlefields. {\color{black} At the end of each round,} the learner observes the total reward that it receives from all battlefields, but it does not know which battlefields that it has won or lost.

{\color{black}
In \emph{dynamic} CBG, the learner and the adversary play the one-shot CBG repeatedly. The learner has a fixed total budget $B \in \mbz_+$ for the entire time horizon. The total number of troops to be allocated by the learner through $T$ rounds cannot exceed $B$.} The dynamic CBG terminates when the time horizon is reached or the {\color{black} learner's} budget is exhausted. 
{\color{black} No budget constraint is imposed on the adversary. However, it is assumed that in each round, the adversary chooses an allocation action according to some fixed distribution $\mcp_{\text{adv}}$ unknown to the learner. Such a distribution is also referred to as \emph{mixed strategy} as the adversary randomizes over its action set and chooses an allocation action from the action set according to the distribution $\mcp_{\text{adv}}$. Furthermore, we make the following assumption which is} realistic from a practical point of view as the learner is not willing to {\color{black} consume} too much budget in one {\color{black} single} stage to sacrifice the long-term reward.


\begin{assumption}\label{assump:c}
We assume that the maximum budget {\color{black} consumed} by the learner in each stage is at most $m = cB/T$, where {\color{black} $c \in \mbr_+$} is some constant. That is, the learner is not willing to allocate $c$ times more than the per-round average budget to any stage. {\color{black} Without loss of generality, we may assume $m \in \mbz_+$.}
\end{assumption}

{\color{black}
In fact, if the distribution of the adversary's strategy $\mcp_{\text{adv}}$ were known to the learner, the learner could determine the optimal mixed strategy by solving a linear program (LP). Firstly, let $\bm{u}_t = (u_{t,1}, \cdots, u_{t,n}) \in \mbz_+^n$ and $\bm{v}_t = (v_{t,1}, \cdots, v_{t,n}) \in \mbz_+^n$ represent the learner's and the adversary's allocation actions in round $t \in [T]$ respectively, where $u_{t,i}$ and $v_{t,i}$ denote the number of troops allocated to battlefield $i \in [n]$ by the learner and the adversary, respectively.
The learner receives a reward 
\begin{equation}\label{eq:reward-CBG}
    r_t(\bm{u}_t) = \sum_{i=1}^n b_i \left( \mdo\{u_{t,i} > v_{t,i}\} + \frac{\mdo\{u_{t,i} = v_{t,i}\}}{2} \right),
\end{equation}
where $\mdo\{\cdot\}$ is the indicator function. In fact, the reward function in \eqref{eq:reward-CBG} should be of the form  $r(\bm{u}_t,\bm{v}_t)$ as it is a function of both $\bm{u}_t$ and $\bm{v}_t$. However, for simplicity of notation, in the remainder of this work we suppress the dependency on the adversary's action $\bm{v}_t$ by simply writing $r_t(\bm{u}_t)=r(\bm{u}_t,\bm{v}_t)$.

Since the adversary's allocation action $\bm{v}_t$ is random and $\bm{v}_t \sim \mcp_{\text{adv}}$, $r_t(\bm{u}_t)$ is also a random variable and follows some fixed distribution. Moreover, the learner's set of allocation actions is represented as 
\begin{equation}\label{eq:action-set}
    \mc{S} = \left\{\bm{u} \in \mb{Z}_+^n: \sum_{i=1}^n u_i \leq m \right\},
\end{equation}
where $m = cB/T$ is defined in Assumption \ref{assump:c} and represents the maximum number of troops that the learner is willing to allocate in one single stage. Note that $\mcs$ is a finite discrete set and contains exponentially many elements.
The learner's LP can be formulated as follows. Associated with each $\bm{u} \in \mcs$, there is a variable $y(\bm{u})$ which represents the probability that the learner chooses allocation action $\bm{u}$ in each round. The learner aims to maximize the expected cumulative reward, where the expectation is taken with respect to the learner's action distribution $(y(\bm{u}))_{\bm{u} \in \mcs}$ and the randomness of the adversary $\mcp_{\text{adv}}$. Thus, if we define $r(\bm{u}) = \mbe[r_t(\bm{u}) \vert \bm{v}_t \sim \mcp_{\text{adv}}]$, the objective function of the LP is 
\begin{align}\nonumber
\sum_{t=1}^T \mbe \left[ r_t(\bm{u}_t) \vert \bm{v}_t \sim \mcp_{\text{adv}}, \bm{u}_t \sim y(\cdot) \right]= T\cdot \sum_{\bm{u} \in \mcs} r(\bm{u}) y(\bm{u}).
\end{align}
Furthermore, the expected cumulative consumption of troops cannot exceed the total budget $B$. 
Therefore, the LP is formulated as follows: 
\begin{align}
\max \quad & 
T \cdot \sum_{\bm{u} \in \mcs} r(\bm{u}) y(\bm{u}), \label{eq:cbg-LP-obj} \\
\text{s.t.} \quad & T \cdot \sum_{\bm{u} \in \mcs} \left(\sum_{i=1}^n u_i\right) y(\bm{u}) \leq B, \label{eq:cbg-LP-budget} \\
& \sum_{\bm{u} \in \mcs} y(\bm{u}) = 1, \qquad y(\bm{u}) \geq 0 \quad \forall \bm{u} \in \mcs. \label{eq:cbg-LP-distr}
\end{align}
Note that in the above LP, the only decision variables are $\{y(\bm{u})\}_{\bm{u}\in\mcs}$ since $r(\bm{u})$ can be determined given $\mcp_{\text{adv}}$, and $\sum_{i=1}^n u_i$ can be easily computed for each $\bm{u} \in \mcs$.

Unfortunately, since the adversary's strategy distribution $\mcp_{\text{adv}}$ is unknown to the learner, solving the above LP becomes impossible because $r(\bm{u})$ in Eq. \eqref{eq:cbg-LP-obj} cannot be evaluated.
For that reason, we will use an online-learning approach for multi-armed bandits where the learner learns the adversary's strategy distribution $\mcp_{\text{adv}}$ while interacting with it.
In the dynamic CBG, starting from $t=1$, the learner selects an allocation action $\bm{u}_t \in \mcs$ in each round $t$ until some round $\tau \in [T+1]$, where $\tau$ is the first round when either the time horizon is reached (i.e., $\tau = T+1$), or the consumption in that round exceeds learner's remaining budget for the first time (i.e., the budget is exhausted, $\sum_{i=1}^n u_{\tau,i} > B - \sum_{t=1}^{\tau-1} \sum_{i=1}^n u_{t,i}$). Such a $\tau$ is defined as the \emph{stopping time} of the dynamic game as the game terminates in round $\tau$. Note that $\tau$ is a random variable. In each round $t \in [\tau-1]$, the learner selects an allocation action $\bm{u}_t \in \mcs$ based on the history of its actions $\{\bm{u}_s\}_{s=1}^{t-1}$ and its realized rewards $\{r_s(\bm{u}_s)\}_{s=1}^{t-1}$ using some learning algorithm and receives a reward $r_t(\bm{u}_t)$. The realized cumulative reward of the learner over the time horizon is $\sum_{t=1}^{T} r_t(\bm{u}_t)=\sum_{t=1}^{\tau-1} r_t(\bm{u}_t)$. 
The performance of the learning algorithm is compared to the optimal solution of the LP represented by Eq. \eqref{eq:cbg-LP-obj}-\eqref{eq:cbg-LP-distr} and is measured by regret. More precisely, denote by $\optcbg$ the optimal value of the LP represented by Eq. \eqref{eq:cbg-LP-obj}-\eqref{eq:cbg-LP-distr}. It is also called the \emph{best-in-hindsight} expected cumulative reward of the learner because it is the optimal achievable expected total reward over the time horizon if the learner had known the adversary's strategy distribution $\mcp_{\text{adv}}$. The optimal solution to the LP is defined accordingly as the best-in-hindsight mixed strategy or optimal mixed strategy. 
The \emph{realized regret} of the learner is defined by $R(T) \triangleq \optcbg - \sum_{t=1}^T r_t(\bm{u}_t)$. 
The performance of the learning algorithm is measured by $R(T)$.
Our main objective is to devise a polynomial-time learning algorithm that achieves a realized regret $R(T)$ that is sublinear in $T$ and scales polynomially in other problem parameters with high probability.}

\section{Preliminaries} \label{sec:prelim}

In this section, we provide a brief overview of the two key algorithms that we will use later to establish our main {\color{black} algorithm and the} theoretical results. 

\subsection{Lagrangian Game and  \texttt{LagrangeBwK} Algorithm} \label{subsec:prelim-LagrangeBwK}

We start with \texttt{LagrangeBwK}, which is an algorithm for the stochastic BwK initially introduced in \cite{immorlica2020lagrangebwk}. It is based on the linear relaxation of the stochastic BwK problem and the corresponding Lagrange function. More precisely, consider an MAB problem subject to knapsack constraints. {\color{black} In the MAB problem, the learner makes sequential decisions over the time horizon $T$. The learner has a finite set of actions (arms) denoted by $\mcs$ and $d$ types of resources. For each resource $i \in [d]$, the learner has a fixed budget $B_i$. Pulling an arm $a \in \mcs$ consumes $w_i(a)$ unit of resource $i$ in expectation and returns an expected reward $r(a)$. After proper scaling, we assume that the learner has the same budget for each resource $i \in [d]$, i.e., $B_i = B_0 \leq T \: \forall i \in [d]$. Without loss of generality, we assume that time is also a type of resource with budget $B_0$, and pulling any arm $a\in \mcs$ deterministically consumes $B_0/T$ unit of time resource. The learner aims to maximize its expected cumulative reward while respecting the budget constraint of each resource. Define $(y(a))_{a\in\mcs}$ as the distribution over the action space. Therefore, this leads to the following LP for the learner:
\begin{align}
    \max \quad & \sum_{a \in \mcs} r(a) y(a), \label{eq:bwk-LP-obj} \\
    \text{s.t.} \quad & \sum_{a \in \mcs} w_i(a) y(a) \leq \frac{B_0}{T} \quad \forall i \in [d], \label{eq:bwk-LP-budget} \\
    & \sum_{a \in \mcs} y(a) = 1, \qquad 
    y(a) \geq 0 \quad \forall a \in \mcs.\label{eq:bwk-LP-distr}
\end{align}
By comparing the LP defined earlier for the dynamic CBG problem by Eq. \eqref{eq:cbg-LP-obj}-\eqref{eq:cbg-LP-distr} with this LP represented by Eq. \eqref{eq:bwk-LP-obj}-\eqref{eq:bwk-LP-distr}, one can see that the former is a special case of the latter LP where there is only one type of resource (i.e., the troops) apart from the time resource, and the objective function is scaled by $T$. The action set $\mcs$ is exactly the set of allocation actions in Eq. \eqref{eq:action-set}.
}

The Lagrangian function of the LP obtained by relaxing the knapsack constraints represented by Eq. \eqref{eq:bwk-LP-budget} using dual variables $\bm{\lambda} = (\lambda_1, \cdots, \lambda_d)$ is given by 
\begin{equation} \label{eq:LP-Lagrange}
    \mc{L}\left(\bm{y}, \bm{\lambda}\right) = \sum_{a \in \mcs} r(a) y(a) + \sum_{i=1}^d \lambda_i \left(1 - \frac{T}{B_0} \sum_{a \in \mcs}  w_i(a) y(a) \right).
\end{equation}
The Lagrangian function in Eq. \eqref{eq:LP-Lagrange} induces a two-player zero-sum game which is called \emph{Lagrangian game}. The Lagrangian game is described as follows. There are two players in the game: a primal player and a dual player.
The primal player selects an arm \(a \in \mcs\), and the dual player selects a resource \(i \in [d]\). The payoff for any pair of primal-dual actions is given by
\begin{equation} \label{eq:Lagrange-game-expected-payoff}
    \mc{L}(a, i) = r(a) + 1 - \frac{T}{B_0} w_i(a), \quad \forall a\in \mcs, i\in [d]. 
\end{equation}
In particular, $\mc{L}(a,i)$ is the reward for the primal player to maximize and the cost for the dual player to minimize. 
{\color{black} Moreover, the mixed strategies of the primal and dual players are the primal and dual variables $(\bm{y}, \bm{\lambda})$ in the Lagrangian function in Eq. \eqref{eq:LP-Lagrange}.} The Nash equilibrium of the Lagrangian game, denoted by \((\bm{y}^*, \bm{\lambda}^*)\), yields the minimax value $\mc{L}^*$ of the Lagrangian function in Eq. \eqref{eq:LP-Lagrange} which further equals the optimal value of the LP in Eq. \eqref{eq:bwk-LP-obj}-\eqref{eq:bwk-LP-distr} denoted by $\optbwk$. $\bm{y}^*$ also forms an optimal solution to the LP in Eq. \eqref{eq:bwk-LP-obj}-\eqref{eq:bwk-LP-distr} and corresponds to the best-in-hindsight mixed strategy of the learner \cite{badanidiyuru2013bwkabstract,immorlica2019lagrangebwk}.

{\color{black}
Based on the Lagrangian game, the authors in \cite{immorlica2019lagrangebwk} have devised a learning algorithm called \texttt{LagrangeBwK} where the learner creates a repeated Lagrangian game on top of the interactions with the adversary. The Lagrangian game consists of primal and dual players who play against each other. The learner deploys a primal algorithm ($\texttt{ALG}_1$) for the primal player who aims to maximize the payoff of the Lagrangian game and deploys a dual algorithm ($\texttt{ALG}_2$) for the dual player who aims to minimize the payoff. $\texttt{ALG}_1$ and $\texttt{ALG}_2$ are online learning algorithms for adversarial MAB without budget constraints, for example, EXP3.P \cite{auer2002nonstochastic}, Hedge \cite{freund1997hedge} (see Algorithm \ref{alg:hedge} in Appendix \ref{ap:algr}), and \textsc{ComBand} \cite{cesabianchi2012} algorithms.
We note that the learner operates both primal and dual algorithms $\texttt{ALG}_1$ and $\texttt{ALG}_2$ in the Lagrangian game. The adversary's actions merely change the payoff structure of the Lagrangian games at different rounds denoted by $\mc{L}_t(a_t, i_t)$\footnote{{\color{black}Here, $\mc{L}_t(a_t, i_t)=r_t(a_t)+1+\frac{T}{B_0}w_{t,i_t}(a_t)$ is the the Lagrangian function at round $t$, where the subscript $t$ again suppresses the dependency of the parameters on the adversary's action.}}. The learner's budget constraint is taken into consideration through the learner's payoff structure in the Lagrangian game. In each round $t \in [T]$, the primal and dual players play the Lagrangian game and generate an action pair $(a_t, i_t) \in \mcs \times [d]$. The learner gathers this information $(a_t, i_t)$ from the Lagrangian game and chooses the action $a_t$ against the adversary. Then, the learner receives a reward $r_t(a_t)$ and observes the consumption of resources $w_{t,i}(a_t) \: \forall i \in [d]$. Next, the learner computes the realized payoff of the Lagrangian game $\mc{L}_t(a_t, i)\: \forall i\in [d]$ using Eq. \eqref{eq:Lagrange-game-expected-payoff} and feeds this information to the Lagrangian game for the primal and dual players to update their strategies for the next round. The fictitious primal and dual players play and learn against each other in the repeated Lagrangian game, which leads to gradual convergence to the Nash equilibrium of the Lagrangian game, which in turn corresponds to the optimal mixed strategy of the learner against the adversary.}


The pseudocode of algorithm \texttt{LagrangeBwK} is summarized in Algorithm \ref{alg:LagrangeBwK}.
It has been shown that the average play of the primal and dual algorithms \((\bar{\bm{y}}, \bar{\bm{\lambda}})\) by following \texttt{LagrangeBwK} forms an approximate Nash equilibrium of the Lagrangian game, and the average payoff $\bar{\mc{L}}$ converges to {\color{black} $\optbwk$. The regret of algorithm \texttt{LagrangeBwK} is defined similarly as $T \cdot \optbwk - \sum_{t=1}^T r_t(a_t)$.}
An upper bound for the regret of algorithm \texttt{LagrangeBwK} is given in the following lemma.

\begin{lemma}\cite[Theorem 4.4]{immorlica2020lagrangebwk} \label{lemma:LagrangeBwK-regret}
Fix an arbitrary failure probability $\delta \in (0,1)$, and let $R_{1,\delta}(T)$ and $R_{2,\delta}(T)$ denote the high-probability regret bounds of $\texttt{ALG}_1$ and $\texttt{ALG}_2$, respectively. With probability at least \(1 - O(\delta T)\), the regret of algorithm \texttt{LagrangeBwK} is at most \(O\left(\frac{T}{B_0}\right)\left(R_{1,\delta/T}(T) + R_{2,\delta/T}(T) + \sqrt{T \log (dT/\delta)}\right)\). 
\end{lemma}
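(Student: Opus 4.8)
The plan is to treat \texttt{LagrangeBwK} as the repeated play of the zero-sum Lagrangian game with instantaneous payoff \eqref{eq:Lagrange-game-instantaneous-payoff}, in which the primal learner $\texttt{ALG}_1$ and the dual learner $\texttt{ALG}_2$ each run a no-regret subroutine. The backbone is the classical fact that when both players of a repeated zero-sum game follow no-regret algorithms, their time-averaged strategies form an approximate Nash equilibrium and the average realized payoff converges to the value of the game. I would first make this quantitative: summing the per-round payoffs and adding/subtracting the best fixed primal arm and the best fixed dual resource, the cumulative realized Lagrangian payoff is sandwiched around the minimax value of the game, with two-sided slack controlled by the subroutine regrets $R_{1,\delta/T}(T)$ and $R_{2,\delta/T}(T)$. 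Because the dual player minimizes over $\mathcal{R}$, its best response always pins down the tightest (most depleted) resource, which is what certifies that the knapsack constraints are respected up to the incurred regret.

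Next I would identify the game value. By the minimax theorem and the construction in \eqref{eq:LP-Lagrange}, the value of the Lagrangian game equals $\mathcal{L}^* = \text{OPT}_{\text{LP}}$, and since LP \eqref{eq:LP-Primal} is a relaxation of the budget-constrained control problem \eqref{eq:control-system-cost-functional}--\eqref{eq:control-system-initial-condition}, we have $\text{OPT}_{\text{LP}} \geq \text{OPT}_{\text{DP}}$ when accumulated over $T$ rounds; this is the inequality that turns the game-value comparison into an upper bound on $R(T)$. The prefactor $O(T/B_0)$ then arises directly from the scale of the payoff \eqref{eq:Lagrange-game-expected-payoff}: the Lagrangian payoffs range over an interval of width $O(T/B_0)$ through the coefficient $T/B_0$ multiplying $w_i(a)$, so the subroutine regret bounds, which are stated on a normalized payoff scale, must be multiplied by this range to translate Lagrangian regret back into cumulative-reward regret.

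The high-probability part requires replacing the expected payoffs $\mathcal{L}(a,i)$ by their realized counterparts $\mathcal{L}_t(a_t,i_t)$. Since the arms and resources are chosen adaptively, I would form the martingale difference sequence of (realized minus conditionally expected) payoffs and resource consumptions, and apply a Freedman- or Azuma-type inequality to bound its partial sums by $\sqrt{T \log(|\mathcal{R}|T/\delta)}$. The factor $|\mathcal{R}|$ and the $\log(1/\delta)$ dependence enter through a union bound over the resources and over the $T$ rounds; this union bound is precisely what replaces $\delta$ by $\delta/T$ in the subroutine regret terms and inflates the overall failure probability to $O(\delta T)$.

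The step I expect to be the main obstacle is the analysis of the budget-exhaustion stopping time. The game may halt strictly before round $T$, so the effective horizon is a random $\tau \leq T$, and one must argue that stopping early degrades the reward by at most the same $O(T/B_0)$-scaled concentration term rather than by an uncontrolled amount. Reconciling the adaptive stopping time with the no-regret guarantees of $\texttt{ALG}_1$ and $\texttt{ALG}_2$, which are stated for a fixed horizon, so that they continue to hold up to $\tau$, is the delicate piece, and it is where the precise form of the $T/B_0$ factor and the logarithmic slack are ultimately determined.
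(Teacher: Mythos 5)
First, a framing point: the paper does not prove this lemma at all --- it is imported verbatim as Theorem 4.4 of \cite{immorlica2020lagrangebwk}, so your proposal can only be compared against the proof in that reference. Against that proof, your skeleton has the right architecture: no-regret dynamics in the repeated zero-sum Lagrangian game give an approximate equilibrium, the game value is tied to $\text{OPT}_{\text{LP}}$, the inequality $\text{OPT}_{\text{LP}} \geq \text{OPT}_{\text{DP}}$ converts the game-value comparison into a regret bound, the $O(T/B_0)$ prefactor comes from the width of the payoff range in \eqref{eq:Lagrange-game-expected-payoff}, and the $\sqrt{T\log(|\mathcal{R}|T/\delta)}$ term comes from martingale concentration with union bounds over resources and rounds (with $\delta/T$ in the subroutine bounds needed so their guarantees hold uniformly over prefixes, hence at the random stopping time).

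However, there is a genuine gap, and it is exactly the step you defer to the end: the budget-exhaustion stopping time is not a nuisance to be absorbed into an ``$O(T/B_0)$-scaled concentration term''; it is handled by a specific algebraic identity that is the heart of the theorem, and without it no sublinear bound follows. Let $\tau$ be the stopping time. If $\tau < T$, some resource $i^*$ is exhausted, i.e.\ $\sum_{t\leq\tau} w_{t,i^*}(a_t) \geq B_0$, and hence
\begin{equation*}
\sum_{t\leq \tau} \mathcal{L}_t(a_t,i^*) \;=\; \sum_{t\leq\tau} r_t(a_t) + \tau - \frac{T}{B_0}\sum_{t\leq\tau} w_{t,i^*}(a_t) \;\leq\; \sum_{t\leq\tau} r_t(a_t) + \tau - T,
\end{equation*}
so the realized reward obeys $\sum_{t\leq\tau} r_t(a_t) \geq \sum_{t\leq\tau}\mathcal{L}_t(a_t,i^*) + (T-\tau)$. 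The dual subroutine's regret bound at horizon $\tau$ lower-bounds $\sum_{t\leq\tau}\mathcal{L}_t(a_t,i^*)$ by $\sum_{t\leq\tau}\mathcal{L}_t(a_t,i_t) - R_2$, the primal bound plus concentration lower-bounds that by $\tau\mathcal{L}^* - R_1 - O(\cdot)$, and since rewards lie in $[0,1]$ and the dummy time resource forces $\mathcal{L}^*\leq 1$, the surplus satisfies $(T-\tau)\cdot 1 \geq (T-\tau)\mathcal{L}^*$; therefore $\tau\mathcal{L}^* + (T-\tau) \geq T\mathcal{L}^* \geq \text{OPT}_{\text{LP}} \geq \text{OPT}_{\text{DP}}$, i.e.\ early stopping costs nothing beyond the regret terms. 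Absent this argument, stopping at $\tau$ could in principle forfeit $\Theta(T-\tau)$ reward. Two smaller corrections: your claim that the dual player's best response ``certifies that the knapsack constraints are respected'' inverts the logic --- feasibility is enforced by the hard stopping rule, and the dual player's role is to make the realized Lagrangian payoff a valid lower-bound proxy for reward at the stopping time, as above; and the worry that the subroutines' fixed-horizon guarantees may not apply at the random time $\tau$ is resolved precisely by the $\delta\mapsto\delta/T$ union bound over prefixes, which is why that substitution appears in the statement.
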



\begin{algorithm}[t]
\caption{\texttt{LagrangeBwK} Algorithm} \label{alg:LagrangeBwK}
\textbf{Input:} $B_0$, $T$, primal algorithm $\texttt{ALG}_1$, dual algorithm $\texttt{ALG}_2$. 
\begin{algorithmic}[1]
    \For{round $t = 1, \cdots, T$}
    \State {\color{black} Learner's} $\texttt{ALG}_1$ chooses arm $a_t \in \mcs$. 
    \State {\color{black} Learner's} $\texttt{ALG}_2$ chooses resource $i_t \in [d]$. 
    \State Observe $r_t(a_t)$ and $w_{t,i}(a_t), \forall i \in [d]$. 
    \State $\mc{L}_t(a_t, i_t)$ is {\color{black} computed} and returned to $\texttt{ALG}_1$ as reward. 
    \State $\mc{L}_t(a_t, i)$ is {\color{black} computed} and returned to $\texttt{ALG}_2$ as cost for each $i \in [d]$. 
    \EndFor 
  \end{algorithmic}
\end{algorithm}

The algorithm \texttt{LagrangeBwK} can serve as the main algorithm for {\color{black} the learner in the dynamic CBG with a budget constraint}. Unfortunately, a direct application of \texttt{LagrangeBwK} for dynamic CBG results in a large regret due to the exponential number of allocation actions. We shall circumvent that issue by leveraging the combinatorial structure of CBG and using another algorithm \textsc{Edge} in conjunction with \texttt{LagrangeBwK}. Here, the algorithm \textsc{Edge} is an efficient algorithm for regret minimization in a repeated CBG without any total budget constraint \cite{vu2019combinatorial}.

\subsection{Path Planning Graph and Algorithm \textsc{Edge}}
 
The algorithm \textsc{Edge}, an efficient variant of the combinatorial MAB algorithm, has been used in \cite{vu2019combinatorial} for the CBG without budget constraint. The main idea of the algorithm \textsc{Edge} is to transform the CBG to a path planning problem on a directed layered graph $G_{m,n}=(\mathcal{V},\mathcal{E})$, with a one-to-one bijection between the action set of the CBG and the set of paths of the layered graph \cite{vu2019combinatorial}. In their problem setting, the one-shot CBG with $n$ battlefields and $m$ troops is repeated for $T$ times. The learner has to allocate all $m$ troops in each stage because any unallocated troops will not be accumulated for the next round.

\begin{figure}[t]
\centering
\includegraphics[width=.6\textwidth]{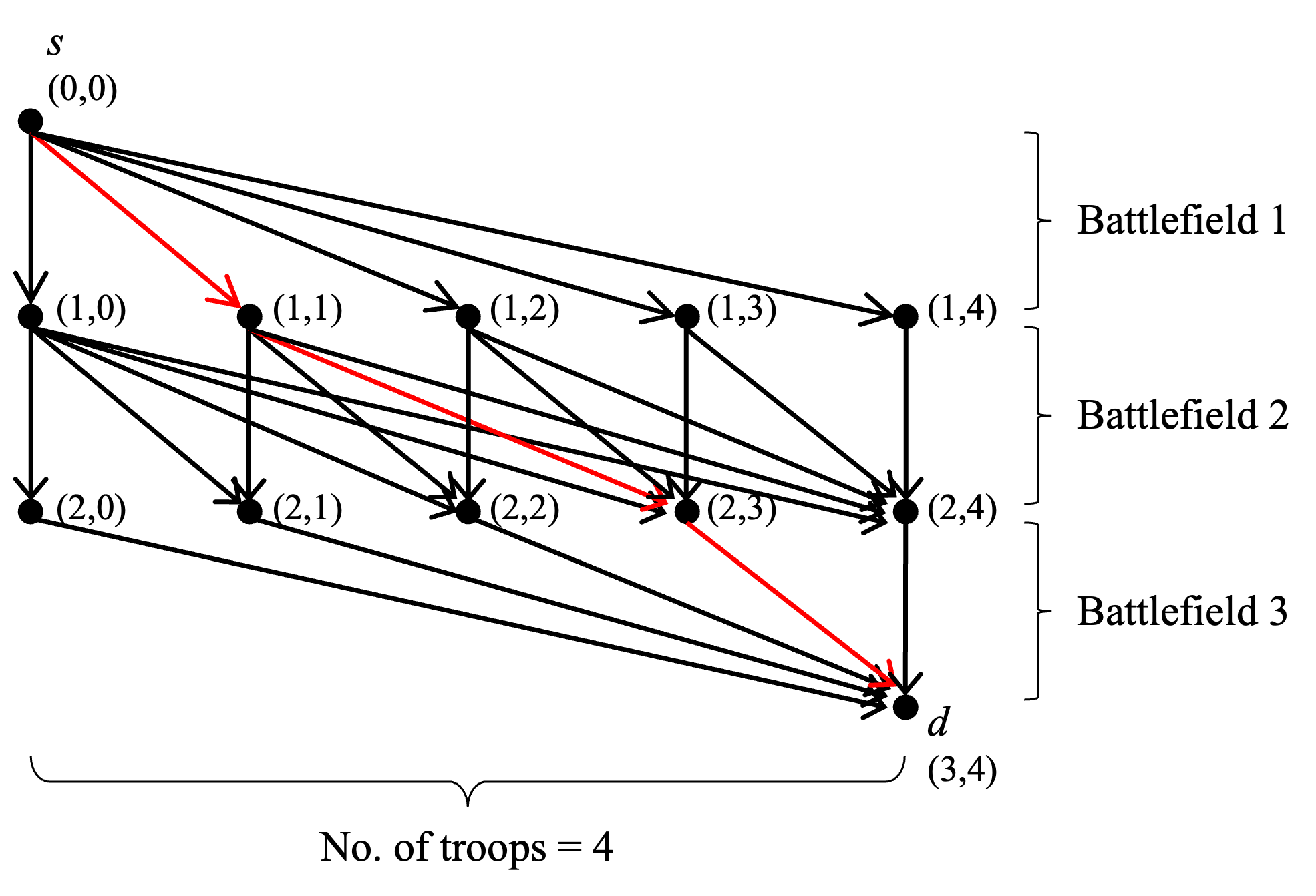}
\caption{An example of layered graph $G_{4,3}$ for CBG with $m=4$ and $n=3$. The red path represents the allocation action $(1,2,1)$ with total consumption 4.}
\label{fig:sample-path-for-edge}
\end{figure}

For a one-shot CBG, one can construct a graph containing $n+1$ layers (layer $0, 1, \cdots, n$), and each layer $i \in [n-1]$ has $m+1$ nodes denoted by $(i,0), (i,1), \cdots, (i,m)$. Layer $0$ has one node $(0,0)$ (namely, \emph{source node} $s$), and layer $n$ has one node $(n,m)$ (namely, \emph{destination node} $d$). There is a directed edge from $(i-1, j)$ to $(i, j')$ if and only if $j' \geq j$, indicating that the quantity of troops assigned to battlefield $i$ equals $j' - j$. 
We refer to Figure \ref{fig:sample-path-for-edge} for an example of the layered graph corresponding to a repeated CBG with $n=3$ battlefields and $m=4$ troops. 
By construction, it is easy to see that any allocation action for the one-shot CBG corresponds to a unique $s,d$-path in the layered graph, and vice versa. 

Instead of sampling paths directly, which corresponds to sampling strategies in the repeated CBG, the algorithm \textsc{Edge} samples the edges that can form an $s,d$-path in each round $t\in[T]$. Doing so, one can design polynomial-time algorithms for sampling edges rather than dealing with exponentially many paths. 
This can be done by assigning weights to the edges and updating or maintaining the edge weights.
{\color{black} Now, if we denote the set of all $s,d$-paths as $\mcs$ and let $E = \vert\mc{E}\vert$, a path $\bm{u} \in \mcs$ is represented by an $E$-dimensional binary vector.}
We refer to Algorithm \ref{alg:Edge} for a detailed description of the algorithm \textsc{Edge}. One can upper-bound the expected regret of the algorithm \textsc{Edge} for the repeated CBG as follows:

\begin{lemma}\cite[Proposition 4.1]{vu2019combinatorial} \label{lemma:Edge-regret}
The algorithm \textsc{Edge} yields an expected regret at most 
\(O\left(\sqrt{\left(\frac{n}{E \lambda^*} + 1\right) ET \ln S}\right)\), where {\color{black} $S = \vert\mcs\vert$}, and $\lambda^*$ is the smallest nonzero eigenvalue of the co-occurrence matrix \(M(\mu)\) of the exploration distribution $\mu$. {\color{black} Here, $\mu$ is a distribution over $\mcs$ and is an input into the algorithm, and $M(\mu) \triangleq \mbe_{\bm{u}\sim\mu}[\bm{u}\bm{u}^{\tp}]$.} Moreover, the running time of the algorithm is at most $O(n^2 m^4 T)$.
\end{lemma}

\begin{algorithm}[t]
\caption{\textsc{Edge($\mu$)} Algorithm} \label{alg:Edge} 
\textbf{Input:} $m$, $n$, $T$, $\gamma \in [0,1]$, $\eta > 0$, distribution $\mu$ {\color{black} over the set of $s,d$-paths}.
\begin{algorithmic}[1]
    \State \textbf{Initialization:} Edge weights $w_e^1 = 1, \forall e \in \mathcal{E}$.
    \For{round $t = 1, \cdots, T$} 
    \State \multiline{The adversary selects a cost vector $\bm{l}_t$ unobserved by the learner.} 
    \State \underline{\textit{Path sampling}}
    \Indent
    \State With probability $\gamma$, sample a path $\bm{u}_t$ from distribution $\mu$. 
    \State \multiline{Otherwise, sample a path $\bm{u}_t$ by edges' weights according to the Weight-Pushing Algorithm (see Algorithm \ref{alg:wp} in Appendix \ref{ap:algr}).}
    \EndIndent 
    \State Observe the reward $r_t(\bm{u}_t)=\bm{l}_t^{\tp} \bm{u}_t$. 
    \State \underline{\textit{Weight updating}}
    \Indent
    \State \multiline{Compute the co-occurrence matrix $C_t$ using the Co-occurrence Matrix Computation Algorithm (see Algorithm \ref{alg:matcomp} in Appendix \ref{ap:algr}).}
    \State Estimate the cost vector $\hat{\bm{l}}_t = r_t(\bm{u}_t)C_t^{-1}\bm{u}_t$. 
    \State \multiline{Update edges' weights \\ $w_e^{t+1} = w_e^t \cdot \exp (\eta \hat{l}_{t,e})\ \forall e \in \mathcal{E}$, where \(\hat{l}_{t,e}\) is the entry of \(\hat{\bm{l}}_t\) corresponding to \(e\).}
    \EndIndent 
    \EndFor 
\end{algorithmic}
\end{algorithm}




\section{Main Results} \label{sec:main-results}

In this section, we first introduce a modified layered graph that allows us to incorporate the budget constraint into the graph's structure and transform the dynamic CBG into a path planning problem. Then, we present an extended analysis of algorithm \textsc{Edge} in which the realized regret is bounded with high probability. After that, we formalize the algorithm \texttt{LagrangeBwK-Edge} using \textsc{Edge} as a subroutine for \texttt{LagrangeBwK} tailored to the budget-constrained dynamic CBG model. 

\subsection{Path Planning for Dynamic CBG with Budget Constraint} \label{subsec:path-planning}

Inspired by \cite{vu2019combinatorial}, we transfer the dynamic CBG to a path planning problem. There are several differences between their CBG model and ours. First, in their model, there is no constraint on the budget for the entire time horizon; in our dynamic CBG, there is a hard constraint that the total budget is at most $B$. Second, there are no system dynamics in \cite{vu2019combinatorial}, i.e., unallocated troops in a round will not accumulate for the remaining rounds. Thus, the learner in \cite{vu2019combinatorial} has to allocate all the available $m$ troops in each round because unallocated troops do not contribute to future rewards. On the contrary, in our model, all rounds are coupled by the state dynamics that unallocated troops will automatically be rolled over to the next round. As a result, the learner must be more strategic and reserve some of its budget for future rounds.

Next, we focus on the action space of the dynamic CBG. 
Let {\color{black} $\tau \in [T+1]$} denote the stopping time of the algorithm {\color{black} defined earlier in Section \ref{sec:model}, which is the first round when either the time horizon is reached, or the consumption in that round exceeds learner's remaining budget for the first time.}
If we look back on the first $\tau-1$ rounds, the learner selects {\color{black} actions from the action set
\begin{equation} \tag{\ref{eq:action-set} revisited}
    \mc{S} = \left\{\bm{u} \in \mb{Z}_+^n: \sum_{i=1}^n u_i \leq m \right\}.
\end{equation}
In round $\tau$, either the time horizon is reached, or the learner could not draw an action from $\mcs$ due to insufficient remaining budget.}
In other words, during the dynamic game, the learner keeps selecting {\color{black} allocation actions} from $\mcs$ until it fails to do so in round $\tau$, {\color{black} and the game terminates immediately in round $\tau$.}
Let $S = \vert \mcs \vert$. One should note that the order of $S$ is exponential in terms of $m$ or $n$, and \(S = O\left(2^{\min \{m,n-1\}}\right)\) \cite{vu2019combinatorial}. Therefore, an efficient learning algorithm that uses the combinatorial structure of CBG is essential for obtaining a polynomial regret bound {\color{black} in terms of $m$ and $n$}.

To provide a path characterization for the allocation actions in $\mcs$, we use a similar idea as in \cite{vu2019combinatorial} to create a layered graph. Since the learner can allocate any integer-valued number of troops between $0$ and $m$ in each round {\color{black} before the dynamic game terminates, each layer between layer 1 and layer $n$} contains $m+1$ nodes, and any node in layer $n$ can be regarded as a destination node. {\color{black} An allocation action in $\mcs$} corresponds to a path from $s$ to any node in layer $n$. To unify all the destination nodes, we add an extra layer with one dummy node $d$ and connect all the nodes in layer $n$ to the dummy node $d$ using auxiliary edges. Therefore, the dummy node $d$ is the unique destination node such that any {\color{black} action in $\mcs$} corresponds to a path from $s$ to $d$. The revised layered graph consists of $n+2$ layers. The first and the last layers have one node, and each of all other layers consists of $m+1$ nodes. Thus, the total number of edges is $E = (m+1)[(m+2)(n-1)+4]/2 = O(nm^2)$. We denote the revised layered graph {\color{black} corresponding to $\mcs$} by $H_{m,n}$ (see Figure \ref{fig:revised-graph-sample-path} for an illustration).

The algorithm \textsc{Edge} works on the revised layered graph $H_{m,n}$. It should be noted that the auxiliary edges in $H_{m,n}$ do not represent battlefields or constitute the {\color{black} allocation actions of the CBG}. As a result, the weights of auxiliary edges do not change with time and remain equal to their initial value $1$ for all rounds. With a slight abuse of notation, we use $\bm{u} \in \{0,1\}^{E}$ to denote the characteristic vector of an $s,d$-path equivalent to the corresponding {\color{black} allocation action} of the learner in the CBG. Moreover, we use $\mcs \subseteq \{0,1\}^{E}$ to denote the set of all $s,d$-paths. Henceforth, we do not distinguish between {\color{black} the set of learner's allocation actions in the dynamic CBG} and the set of $s,d$-paths in the layered graph $H_{m,n}$. 

\begin{figure}[ht]
\centering
\includegraphics[width=.6\textwidth]{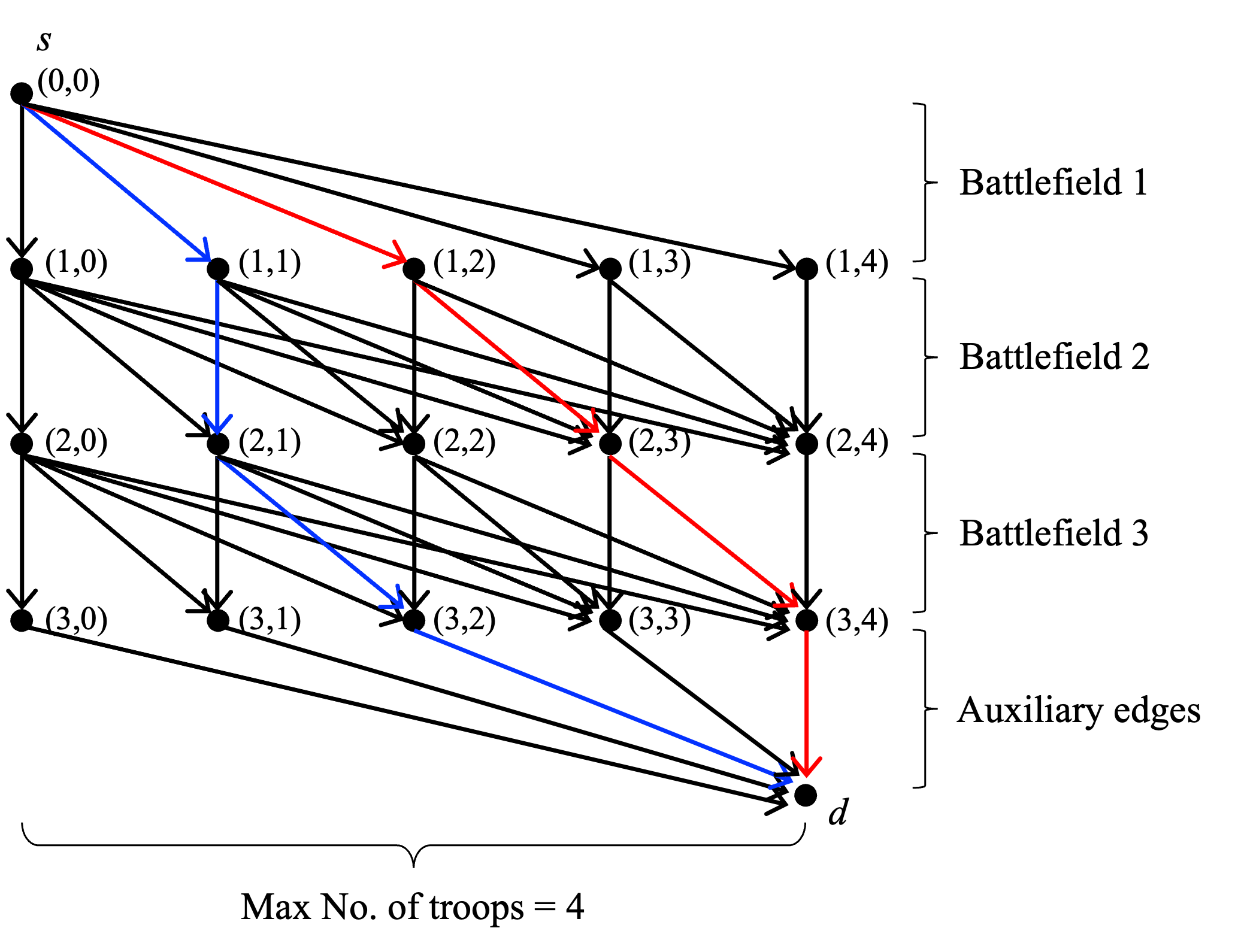}
\caption{An example of revised layered graph $H_{4,3}$ for CBG with $m=4$ and $n=3$. The blue path represents the allocation action $(1,0,1)$ with total consumption 2; the red path represents the allocation action $(2,1,1)$ with total consumption 4.}
\label{fig:revised-graph-sample-path}
\end{figure}

\subsection{Algorithm \textsc{Edge} with High-Probability Regret Bound} \label{subsec:high-prob-bound-algr}

{\color{black} Since \textsc{Edge} \cite{vu2019combinatorial} is an efficient algorithm designed for the repeated CBG without total budget constraint implemented on a layered graph, it is deployed by the learner for the primal algorithm ($\texttt{ALG}_1$) in Lagrangian game.}
In order to implement the algorithm \textsc{Edge} as a subroutine of \texttt{LagrangeBwK}, we need a bound on the \emph{realized regret} which holds with high probability.
{\color{black} In this subsection, the realized regret is defined as the difference between the optimal cumulative realized reward (which is the cumulative realized reward by following the best allocation action that maximizes the cumulative realized reward given the adversary's strategy) and the cumulative realized reward by following algorithm \textsc{Edge}. The expected regret is the expectation of the realized regret with respect to the algorithm's randomness.}
Note that {\color{black} bounding the realized regret} is a stronger requirement than bounding the expected regret. To that end, we extend the analysis in \cite{cesabianchi2012} and borrow some ideas from \cite{bartlett2008highprob} to bound the realized regret for the same algorithm. 
The high-probability bound {\color{black} for realized regret} is stated in the following theorem, the proof of which can be found in Appendix \ref{ap:proof-thm-high-prob-bound}.
Compared to the bound for expected regret stated in Lemma \ref{lemma:Edge-regret}, the bound for realized regret has an additional factor in \(T\) because of the large variance of the estimated reward. The result stated in Theorem \ref{thm:high-prob-Edge} satisfies the requirement in Lemma \ref{lemma:LagrangeBwK-regret}, and therefore the algorithm \textsc{Edge} can be implemented with algorithm \texttt{LagrangeBwK} as a subroutine.

\begin{theorem} \label{thm:high-prob-Edge}
Let \(\gamma = \frac{n}{\lambda^*}\sqrt{\frac{\ln S}{\left(\frac{n}{E\lambda^*}+1\right)ET^{2/3}}}\) and \(\eta = \frac{\gamma \lambda^*}{n}\) in algorithm \textsc{Edge}, {\color{black} where $S = \vert\mcs\vert$, and $\lambda^*$ is the smallest nonzero eigenvalue of the co-occurrence matrix \(M(\mu)\) of the exploration distribution $\mu$. Here, $\mu$ is a distribution over $\mcs$ and is an input into the algorithm, and $M(\mu) \triangleq \mbe_{\bm{u}\sim\mu}[\bm{u}\bm{u}^{\tp}]$.} With probability at least \(1 - O(\delta)\), the algorithm \textsc{Edge} guarantees that 
\begin{equation*}
    R_{\delta}(T) = O\left(T^{2/3}\sqrt{\left(\frac{n}{E\lambda^*}+1\right) E \ln(S/\delta)}\right). 
\end{equation*}
\end{theorem}


\subsection{Algorithm for Dynamic Colonel Blotto Game} \label{subsec:dynamic-cbg-algr}

In this section, we combine the algorithms \texttt{LagrangeBwK} and \textsc{Edge} to devise a hybrid algorithm with sublinear regret for the dynamic CBG. The hybrid algorithm for dynamic CBG with total budget constraint, named \texttt{LagrangeBwK-Edge}, is summarized in Algorithm \ref{alg:LagrangeBwK-Edge}. 
{\color{black} The learner follows the algorithm \texttt{LagrangeBwK-Edge} to play dynamic CBG with the adversary. As stated earlier in Section \ref{subsec:prelim-LagrangeBwK}, in order to play with the adversary strategically, the learner sets up a repeated Lagrangian game with fictitious primal and dual players and deploys two sub-algorithms for them to compete against each other. The learner coordinates the repeated Lagrangian game by feeding and getting feedback so that the Lagrangian game gradually converges to its Nash equilibrium, which is in turn the optimal mixed strategy for the learner against the adversary.}
Algorithm \textsc{Edge} is deployed as the primal algorithm ($\text{\texttt{ALG}}_{1}$) in \texttt{LagrangeBwK} {\color{black} to choose allocation action for CBG}. 
Algorithm Hedge, as introduced in \cite{freund1997hedge}, an algorithm for online learning in adversarial MAB (see Algorithm \ref{alg:hedge} in Appendix \ref{ap:algr} for the pseudocode), is deployed as the dual algorithm ($\texttt{ALG}_2$) in \texttt{LagrangeBwK} {\color{black} to choose resource}.

{\color{black}
In each round $t$, the primal algorithm \textsc{Edge} chooses an allocation action $\bm{u}_t \in \mcs$, and the dual algorithm Hedge chooses one among the two resources: troop (which consumes the budget $B$) and time (which consumes the time horizon $T$). The learner follows the primal algorithm and plays $\bm{u}_t$ against the adversary, consuming $w_t(\bm{u}_t)$ unit of troops and receiving $r_t(\bm{u}_t)$ unit of reward. Then, it computes the realized} payoffs of the Lagrangian game as follows: 
\begin{align}
    \mc{L}_t^{\text{troop}} & = r_t(\bm{u}_t) + 1 - \frac{T}{B} \cdot w_t(\bm{u}_t), \label{eq:revised-Lagrange-function-troop} \\ 
    \mc{L}_t^{\text{time}} & = r_t(\bm{u}_t). \label{eq:revised-Lagrange-function-time}  
\end{align}
{\color{black}
Depending on whether the dual algorithm Hedge chooses troop or time, the learner feeds $\mc{L}_t^{\text{troop}}$ or $\mc{L}_t^{\text{time}}$ to algorithm \textsc{Edge} as reward for weight updating. Then, the learner feeds both $\mc{L}_t^{\text{troop}}$ and $\mc{L}_t^{\text{time}}$ to algorithm Hedge as cost for weight updating.}

One interpretation of Equations \eqref{eq:revised-Lagrange-function-troop} and \eqref{eq:revised-Lagrange-function-time} is that the payoff of Lagrangian game is a trade-off between reward $r_t(\bm{u}_t)$ and consumption $w_t(\bm{u}_t)$. If the reward outweighs the consumption, {\color{black} consumption of the budget is not a serious issue, and} the dual algorithm is more likely to choose time. {\color{black} When the dual algorithm chooses time,} the true reward of one-shot CBG $r_t(\bm{u}_t)$ is returned to the primal algorithm {\color{black} as if the budget constraint did not exist.}
However, if the consumption outweighs the reward, {\color{black} an alert of overconsumption is raised, and} the dual algorithm is more likely to choose troop. {\color{black} When the dual algorithm chooses troop, it warns the primal algorithm by returning} the adjusted value $\mc{L}_t^{\text{troop}} \leq r_t(\bm{u}_t)$ to the primal algorithm so that the budget will not be exhausted too quickly. 

With the result of Theorem \ref{thm:high-prob-Edge}, we obtain the following main theorem, the proof of which can be found in Appendix \ref{ap:proof-thm-LagrangeBwK-Edge}.

\begin{theorem} \label{thm:LagrangeBwK-Edge}
Let $\texttt{ALG}_1$ be algorithm \textsc{Edge} with \(\gamma = \frac{n}{\lambda^*}\sqrt{\frac{\ln S}{\left(\frac{n}{E\lambda^*}+1\right)ET^{2/3}}}\), \(\eta = \frac{\gamma \lambda^*}{(1+c)n}\),
and high-probability regret bound \(R_{1,\delta/T}(T) = O\left(T^{2/3}\sqrt{\left(\frac{n}{E\lambda^*}+1\right) E \ln(ST/\delta)}\right)\) {\color{black} following the same notations as in Theorem \ref{thm:high-prob-Edge}}. Moreover, let $\texttt{ALG}_2$ be algorithm Hedge with high-probability regret bound \(R_{2,\delta/T}(T) = O\left(\sqrt{T \ln (T/\delta)}\right)\). Then, with probability at least \(1 - O(\delta T)\), the regret of Algorithm \ref{alg:LagrangeBwK-Edge} for dynamic CBG is at most  
\begin{align}\nonumber
    R(T)  = O \left(T^{1/6}\sqrt{\frac{nB}{\lambda^*} \ln \left(T/\delta\right)}\right). 
\end{align}
Moreover, the time complexity of \texttt{LagrangeBwK-Edge} is $O(n^2 m^4 T)$. 
\end{theorem}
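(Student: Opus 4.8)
The plan is to obtain Theorem~\ref{thm:LagrangeBwK-Edge} by instantiating the master regret bound of Lemma~\ref{lemma:LagrangeBwK-regret} for the two-resource BwK formulation of the dynamic CBG and then reducing the resulting expression to closed form. First I would pin down the BwK instance: the arm set is the fixed action set $\mathcal{S}$ of CBG strategies, realized as the $s,d$-paths of the revised graph $G_{m,n+1}$, and the resource set $\mathcal{R}$ consists of exactly two resources, \emph{troops} (total budget $B$, per-round consumption $w_t(\bm{u}_t)$) and \emph{time} (budget $T$, unit consumption per round), so $|\mathcal{R}|=2$. Under this identification the per-round Lagrangian payoffs are precisely \eqref{eq:revised-Lagrange-function-troop}--\eqref{eq:revised-Lagrange-function-time}, and I would first verify they stay in the stated ranges: $r_t\in[0,1]$ together with Assumption~\ref{assump:c}, which gives $\tfrac{T}{B}w_t(\bm{u}_t)\le \tfrac{T}{B}m=c$, yields $\mathcal{L}_t^{\text{troop}}\in[1-c,2]$ and $\mathcal{L}_t^{\text{time}}\in[0,1]$.

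Next I would certify that the chosen subroutines satisfy the interface required by Lemma~\ref{lemma:LagrangeBwK-regret}. For $\texttt{ALG}_1=\textsc{Edge}$, the stream fed back to it is the Lagrangian payoff, whose range has width $1+c$ rather than $1$; rescaling the learning rate by $\tfrac{1}{1+c}$ (the stated choice of $\eta$) and invoking Theorem~\ref{theorem:high-prob-Edge} at failure level $\delta/T$ delivers the high-probability bound $R_{1,\delta/T}(T)=O\big(T^{2/3}\sqrt{(\tfrac{n}{E\lambda^*}+1)E\,\ln(ST/\delta)}\big)$. For $\texttt{ALG}_2=$ Hedge, run as a full-information expert algorithm over the two resources, the standard high-probability guarantee gives $R_{2,\delta/T}(T)=O(\sqrt{T\ln(T/\delta)})$. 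The union bound inside Lemma~\ref{lemma:LagrangeBwK-regret} then accounts for the $O(\delta T)$ failure probability in the conclusion.

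I would then substitute both bounds, together with $|\mathcal{R}|=2$, into Lemma~\ref{lemma:LagrangeBwK-regret}, reducing the proof to an algebraic simplification of $\tfrac{T}{B_0}\big(R_{1,\delta/T}(T)+R_{2,\delta/T}(T)+\sqrt{T\log(2T/\delta)}\big)$. Here I would use $E=O(nm^2)$ to rewrite $(\tfrac{n}{E\lambda^*}+1)E=\tfrac{n}{\lambda^*}+E$ and bound it by $O(\tfrac{nm}{\lambda^*})$, use $\ln S=O(\min\{m,n-1\})$ to fold $\ln(ST/\delta)$ into $O(\ln(T/\delta))$, and track the common rescaled budget $B_0$ so that the prefactor $\tfrac{T}{B_0}$ is controlled. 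The decisive step is the substitution $m=cB/T$ from Assumption~\ref{assump:c}: it turns the $\sqrt{m}$-scale carried by the \textsc{Edge} regret, namely $\sqrt{nm/\lambda^*}$, into $\sqrt{nB/(T\lambda^*)}$, which both lowers the exponent of $T$ from $2/3$ to $1/6$ and produces the $\sqrt{B}$ factor, so that the dominant \textsc{Edge} term collapses to $O\big(T^{1/6}\sqrt{\tfrac{nB}{\lambda^*}\ln(T/\delta)}\big)$, while the $R_2$ and $\sqrt{T\log(2T/\delta)}$ terms are lower order in this regime. Finally, since the BwK analysis bounds regret against $\text{OPT}_{\text{LP}}\ge\text{OPT}_{\text{DP}}$ and the single round drawn from the reduced action set $\Tilde{\mathcal{S}}_\tau$ at the stopping time $\tau$ contributes only $O(1)$, the same bound controls the realized regret $R(T)$ of \eqref{eq:regret-def}.

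I expect the main obstacle to be this last simplification rather than any conceptual step: one must argue, in the intended parameter regime (e.g. $m\lambda^*=O(1)$), exactly which of the $\tfrac{n}{\lambda^*}$ and $E=O(nm^2)$ contributions sets the scale of the \textsc{Edge} regret, and determine the effective value of $B_0$ after the two resources are rescaled to a common budget, so that $\tfrac{T}{B_0}$ times the \textsc{Edge} bound collapses cleanly to the advertised $T^{1/6}\sqrt{nB/\lambda^*\,\ln(T/\delta)}$ and genuinely dominates the Hedge and confidence terms. Once the correct dominance and $B_0$ are fixed, the remaining manipulations are routine.
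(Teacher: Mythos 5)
Your proposal is correct and follows essentially the same route as the paper's proof: instantiate Lemma~\ref{lemma:LagrangeBwK-regret} with \textsc{Edge} (Theorem~\ref{theorem:high-prob-Edge} at level $\delta/T$, learning rate rescaled by $\frac{1}{1+c}$ since $|\mathcal{L}_t^{\text{troop}}|\le 1+c$) as $\texttt{ALG}_1$ and Hedge over the two resources as $\texttt{ALG}_2$, then simplify via $E=O(nm^2)$, $\ln S=O(m)$, and $m=cB/T$. The two items you deferred are settled exactly as you anticipated: rescaling troop consumption by its per-round cap $m$ gives troop budget $B/m=T/c$, so $B_0=T/c$ and the prefactor $T/B_0=c=O(1)$; and the paper implicitly works in the regime where $\frac{n}{E\lambda^*}$ dominates the $+1$ (equivalently $m^2\lambda^*=O(1)$), under which $\bigl(\frac{n}{E\lambda^*}+1\bigr)E\ln(ST/\delta)=O\bigl(\frac{nB}{T\lambda^*}\ln(T/\delta)\bigr)$ and $T^{2/3}$ times its square root collapses to the stated bound.
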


\begin{algorithm}[t]
\caption{\texttt{LagrangeBwK-Edge} Algorithm} \label{alg:LagrangeBwK-Edge}
\textbf{Input:} $B$, $T$, $m=cB/T$, $n$, primal algorithm \textsc{Edge}, dual algorithm Hedge. 
\begin{algorithmic}[1]
    \For{round $t = 1, \cdots, T$}
    \State The adversary plays.
    \State \multiline{Sample a path $\bm{u}_t \in \mathcal{S}$ by edges' weights using algorithm \textsc{Edge}.}
    \If{unsuccessful due to insufficient budget}
        \State Terminate the algorithm.
    \EndIf 
    \State \multiline{Observe the reward $r_t(\bm{u}_t)$, and compute $\mc{L}_t^{\text{troop}}$ and $\mc{L}_t^{\text{time}}$.}
    \State Choose either troop or time using algorithm Hedge. 
    \If{algorithm Hedge chooses troop}
        \State Pass $\mc{L}_t^{\text{troop}}$ to algorithm \textsc{Edge} as reward for weight updating. 
    \Else
        \State Pass $\mc{L}_t^{\text{time}}$ to algorithm \textsc{Edge} as reward for weight updating.
    \EndIf
    \State Update the weights of non-auxiliary edges using \textsc{Edge}.
    \State \multiline{Pass $\mc{L}_t^{\text{troop}}$ and $\mc{L}_t^{\text{time}}$ to algorithm Hedge as cost and update their weights.}
    \EndFor
   \end{algorithmic}
\end{algorithm}


\section{Simulation Results} \label{sec:simulation}

In this section, we show the simulation results to demonstrate the performance of the algorithm \texttt{LagrangeBwK-Edge}. 
Firstly, we consider two types of adversaries: i) the \emph{static adversary}, who at each round $t$ deterministically assigns one troop to each of the first two battlefields, and ii) the \emph{random adversary}, who at each round $t$ repeatedly draws a battlefield by uniform distribution and assigns one troop to it until {\color{black} two troops are assigned in that round.}
{\color{black} For the learner,} we fix $B/T = 2$ and $c = 2$. That is, the per-round average budget of the learner is 2, and the learner is willing to distribute at most {\color{black} 4 troops} to each round.
Moreover, all battlefields are assumed to have the same weights. 
{\color{black} We compare the performance of \texttt{LagrangeBwK-Edge} with uniformly random strategy, i.e., the learner uniformly randomly samples an allocation action in $\mcs$ (defined in Eq. \eqref{eq:action-set}) in each round until the stopping time. For each setting of parameters, the algorithms are executed 20 times, and the average results of the 20 trials are taken.}
The regret results for static and random adversaries are shown in Figs. \ref{subfig:static} and \ref{subfig:unif_rand}.

Next, we consider a \emph{super adversary} who repeatedly assigns 2 troops to each of the fixed $(n-1)$ battlefields in each round, leaving one battlefield unoccupied. The learner, however, does not know {\color{black} which battlefield is left unoccupied by the adversary. In this scenario, for the learner,} we set $B/T = 1$ and $m = 4$. In this case, the learner's per-round average budget is 1, and it is not willing to distribute more than 4 {\color{black} troops} to each round. Hence, the adversary is much more powerful than the learner as it possesses $2(n-1)$ times more troops in total than the learner. The learner's optimal strategy is to allocate 1 troop to the battlefield unoccupied by the adversary in each round. Therefore, the learner has to quickly identify which battlefield is left unoccupied without consuming the budget excessively in the early stage. {\color{black} We also compare the performance of \texttt{LagrangeBwK-Edge} with uniformly random strategy, and each algorithm is executed 20 times with the average results taken.} The numerical results for the super adversary are shown in Fig. \ref{subfig:super}. 

The numerical results in all cases {\color{black} are two fold. Firstly, the simulation results demonstrate a regret sublinear in $T$ for algorithm \texttt{LagrangeBwK-Edge}}, which justifies our theoretical bound in Theorem \ref{thm:LagrangeBwK-Edge} {\color{black} and proves the effectiveness of learning. By contrast, the regret by following uniformly random strategy is essentially linear in $T$. Secondly, compared to uniformly random strategy, our algorithm achieves significantly better regret.}


\begin{figure}[ht]
    \captionsetup{font=small}
    \centering
    \begin{subfigure}[h]{0.475\textwidth}
    \centering
    \includegraphics[trim={0 0.2in 0.3in 0.25in},clip,width=\textwidth]{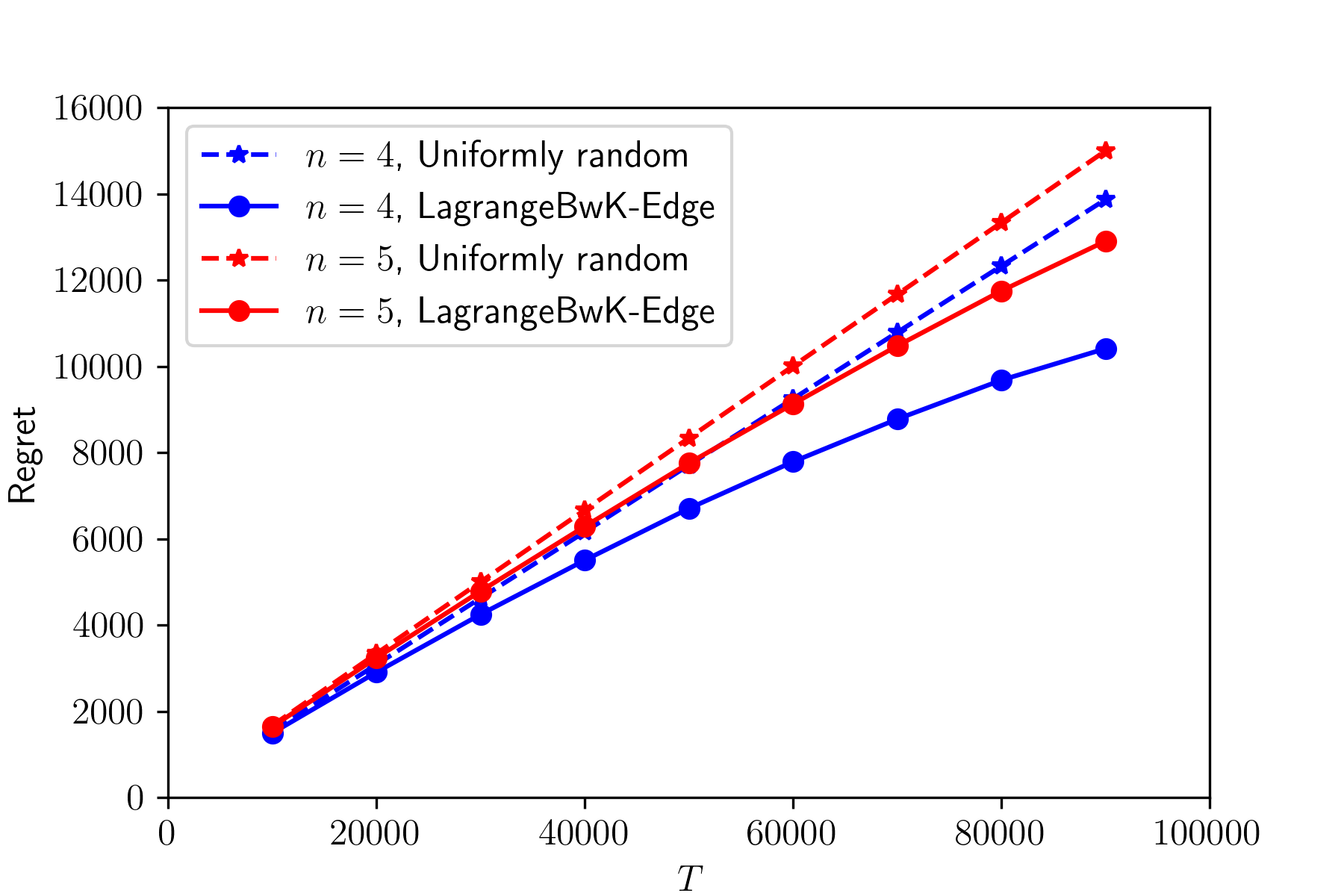}
    \subcaption{Static adversary.}
    \label{subfig:static}
    \end{subfigure}
    \begin{subfigure}[h]{0.475\textwidth}
    \centering
    \includegraphics[trim={0 0.2in 0.3in 0.25in},clip,width=\textwidth]{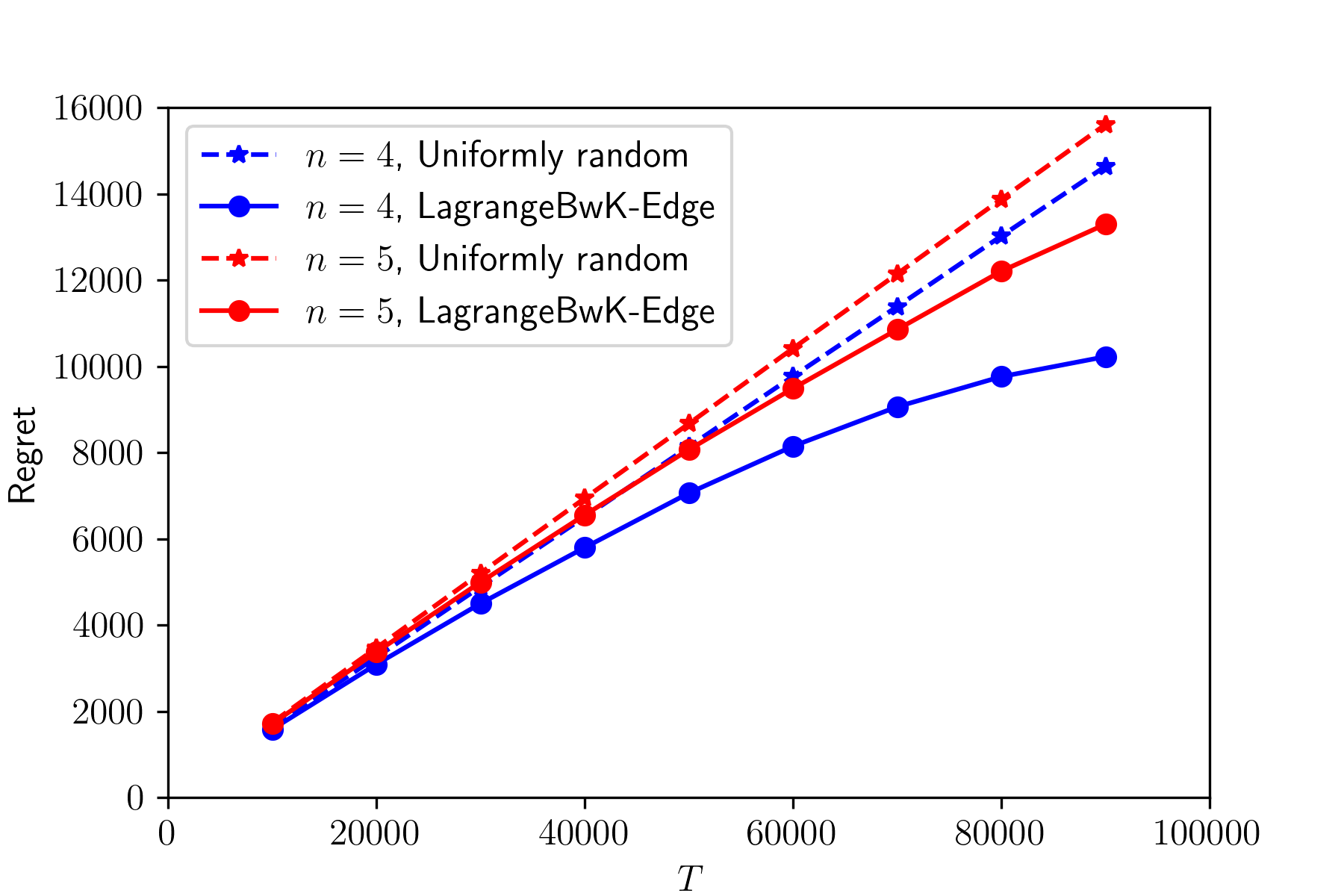}
    \subcaption{Random adversary.}
    \label{subfig:unif_rand}
    \end{subfigure}
    \begin{subfigure}[h]{0.475\textwidth}
    \centering
    \includegraphics[trim={0 0.2in 0.3in 0.25in},clip,width=\textwidth]{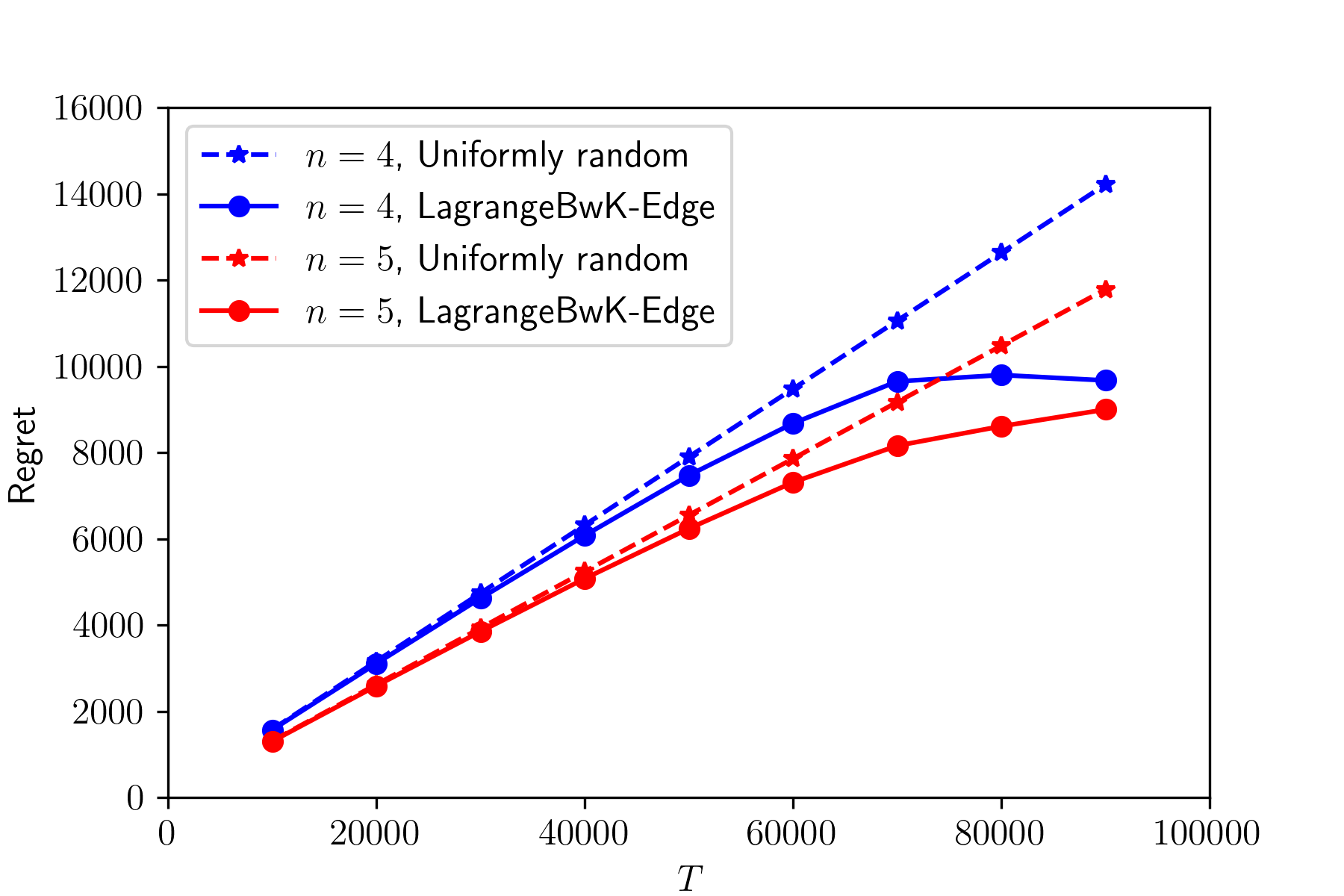}
    \subcaption{Super adversary.}
    \label{subfig:super}
    \end{subfigure}
    \caption{Simulation results for \texttt{Lagrange-BwK}.}
    \label{fig:simulation}
\end{figure}


\section{Conclusion} \label{sec:conclusion}
In this paper, we consider a dynamic Colonel Blotto game with one learner and one adversary. The learner has a limited budget over a finite time horizon. The learner faces {\color{black} a two-level decision problem}: on an upper level, how many troops to allocate to each stage, and on a lower level, how to distribute the troops among battlefields in each stage. On the upper level, the budget-constrained dynamic CBG is modeled by BwK. On the lower level, the single-shot CBG at each stage has such a combinatorial structure that lies in the framework of CB. {\color{black} We have combined the bandits models and devised a polynomial-time learning algorithm \texttt{LagrangeBwK-Edge} by transforming the CBG to a path planning problem on a directed graph and deploying a CB algorithm \textsc{Edge} for the path planning problem as a subroutine of a BwK algorithm \texttt{LagrangeBwK}. To enable this, we have extended the analysis of algorithm \textsc{Edge}. Our algorithm \texttt{LagrangeBwK-Edge} admits a regret bound sublinear in time horizon and polynomial in other parameters that holds with high probability.}


As a future direction, one can consider a version of the model that we consider here in which an adversary is replaced by another strategic player. Also, it would be interesting to improve our regret bounds using a more detailed UCB analysis.



\bibliographystyle{alpha}
\bibliography{references}

\newpage

\begin{appendices}

\section{Existing Algorithms}
\label{ap:algr}

For the sake of completeness, in this appendix, we provide a detailed description of the existing algorithms (Algorithms \ref{alg:wp}--\ref{alg:hedge}) that we use as subroutines in our main algorithm.

\begin{algorithm}[h]
\caption{Weight-Pushing Algorithm, \cite[Algorithm 2]{vu2019combinatorial_arxiv}}
\label{alg:wp} 
\textbf{Input:} $G_{m,n}=(\mathcal{V},\mathcal{E})$, $t\in [T]$, $w_e^t, \forall e \in \mathcal{E}$.
\begin{algorithmic}[1]
    \State \textbf{Initialization:} node set $Q = \{s\}$, $u_0 = s$, $k=0$.
    \For{$k \leq n$}
        \State \multiline{Sample a node $u_{k+1}$ from the set of children of $u_k$ with probability
        \begin{equation*}
            \frac{w_{e_{[u_k,u_{k+1}]}}^t H^t(u_{k+1},d)}{H^t(u_k,d)}
        \end{equation*}
        where $H^t(u,v) \triangleq \sum_{\bm{p} \in \mathcal{P}_{(u,v)}} \prod_{e\in \bm{p}} w_e^t$ and $\mathcal{P}_{(u,v)}$ denotes the set of all paths from $u$ to $v$.}
        \State $Q = Q \cup \{u_{k+1}\}$.
    \EndFor
\end{algorithmic}
\textbf{Output:} $\bm{p}^t \in \mathcal{P}_{s,d}$ which passes the nodes in $Q$. 
\end{algorithm}

\begin{algorithm}[h]
\caption{Co-occurrence Matrix Computation Algorithm, \cite[Algorithm 3]{vu2019combinatorial_arxiv}}
\label{alg:matcomp} 
\textbf{Input:} $G_{m,n}=(\mathcal{V},\mathcal{E})$, $\tilde{w}_e, \forall e \in \mathcal{E}$.
\begin{algorithmic}[1]
    \State \multiline{Compute $H(u,v) \triangleq \sum_{\bm{p} \in \mathcal{P}_{(u,v)}} \prod_{e\in \bm{p}} \tilde{w}_e$ for all $u,v \in \mathcal{V}$ using dynamic programming.}
    \For{$e_1=e_{[u_1, v_1]}\in \mathcal{E}$}
    \State Compute $M(\mu_{\tilde{\bm{w}}})_{e_1,e_1} = \frac{H(s,u_1)\tilde{w}_{e_1}H(v_1,d)}{H(s,d)}$.
    \For{$e_2=e_{[u_2,v_2]} \in \mathcal{E}, \text{idx}(e_2) > \text{idx}(e_1)$}
    \State \multiline{Compute $M(\mu_{\tilde{\bm{w}}})_{e_1,e_2} = \frac{H(s,u_1)\tilde{w}_{e_1}H(v_1,u_2)\tilde{w}_{e_2}H(v_2,d)}{H(s,d)}$ where $\text{idx}(e)$ returns the index of $e$ in $\mathcal{E}$.}
    \EndFor
    \EndFor
    \For{$e_1, e_2 \in \mathcal{E}, \text{idx}(e_2) < \text{idx}(e_1)$}
    \State $M(\mu_{\tilde{\bm{w}}})_{e_1,e_2} = M(\mu_{\tilde{\bm{w}}})_{e_2,e_1}$.
    \EndFor
\end{algorithmic}
\textbf{Output:} Co-occurrence matrix $M(\mu_{\tilde{\bm{w}}})$.
\end{algorithm}

\begin{algorithm}[h]
\caption{Hedge, \cite[Fig. 1]{freund1997hedge}}
\label{alg:hedge} 
\textbf{Input:} $\beta \in (0,1)$, $T$.
\begin{algorithmic}[1]
    \State \multiline{\textbf{Initialization:} $w_a^t = 1$ for each arm $a \in \mathcal{A}$ where $\mathcal{A}$ denotes the set of all arms.}
    \For{round $t = 1, \cdots, T$}
    \State Sample an arm $a_t \in \mathcal{A}$ with probability \(p^t_{a_t} = \frac{w^t_{a_t}}{\sum_{a \in \mathcal{A}} w^t_a}\).
    \State Receive the cost $\{c^t_a\}_{a \in \mathcal{A}}$ for all arms $a \in \mathcal{A}$.
    \State Update the weight $w^{t+1}_a = w^t_a \cdot \beta^{c^t_a}$ for all $a \in \mathcal{A}$.
    \EndFor
\end{algorithmic}
\end{algorithm}

\newpage
\section{Concentration Inequalities} \label{ap:concentration-ineq}

\begin{lemma} [Bernstein's inequality for martingales, Lemma A.8 in \cite{cesabianchi2006book}] \label{lemma:bernstein} 
Let \(Y_1, Y_2, \cdots\) be a \emph{martingale difference sequence} (i.e., \(\mbe[Y_t\vert Y_{t-1},\cdots,Y_1]=0 \: \forall t \in \mbz_+\)). Suppose that \(\lvert Y_t \rvert \leq c\) and \(\mbe[Y_t^2 \vert Y_{t-1}, \cdots, Y_1] \leq v\) almost surely for all \(t \in \mbz_+\). For any \(\delta> 0\), 
\begin{equation*}
    \text{\textup{P}}\left(\sum_{t=1}^T Y_t > \sqrt{2 Tv \ln(1/\delta)} + \frac{2}{3} c\ln(1/\delta) \right) \leq \delta. 
\end{equation*}
\end{lemma}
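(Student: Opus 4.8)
The plan is to prove the inequality by the standard exponential-moment (Chernoff) method for martingales, which converts the tail bound into a statement about conditional moment generating functions. I let $\mc{F}_t = \sigma(Y_1,\dots,Y_t)$ denote the natural filtration and write $S_t = \sum_{s=1}^t Y_s$, so that $S_0 = 0$. The entire argument rests on constructing, for a suitable parameter $\lambda > 0$, an exponential supermartingale of the form $Z_t = \exp\!\big(\lambda S_t - t\, g(\lambda)\big)$ and then applying Markov's inequality to $Z_T$.

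First I would establish the key one-step bound on the conditional log-moment generating function: for every $0 < \lambda < 3/c$ and each $t$,
\[
  \mb{E}\big[e^{\lambda Y_t} \mid \mc{F}_{t-1}\big] \;\le\; \exp\!\Big(\tfrac{\lambda^2 v}{2(1 - \lambda c/3)}\Big) \;=:\; e^{g(\lambda)}.
\]
This follows by expanding $e^{\lambda Y_t} = 1 + \lambda Y_t + \sum_{k\ge 2} \tfrac{(\lambda Y_t)^k}{k!}$, using $\mb{E}[Y_t \mid \mc{F}_{t-1}] = 0$ to annihilate the linear term, bounding $Y_t^k \le |Y_t|^k \le c^{k-2} Y_t^2$ for $k \ge 2$ together with $\mb{E}[Y_t^2 \mid \mc{F}_{t-1}] \le v$ to get $\mb{E}[e^{\lambda Y_t}\mid\mc{F}_{t-1}] \le 1 + v\,(e^{\lambda c} - 1 - \lambda c)/c^2$, and finally invoking the elementary analytic inequality $\frac{e^x - 1 - x}{x^2} \le \frac{1}{2(1 - x/3)}$ for $0 < x < 3$ with $x = \lambda c$, followed by $1 + y \le e^y$. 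I expect this step — in particular verifying the analytic inequality and the term-by-term control of the higher moments — to be the main technical obstacle; everything downstream is essentially bookkeeping.

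With the one-step bound in hand, I would check that $Z_t = \exp(\lambda S_t - t\, g(\lambda))$ is a supermartingale: conditioning on $\mc{F}_{t-1}$ gives $\mb{E}[Z_t \mid \mc{F}_{t-1}] = Z_{t-1}\, e^{-g(\lambda)}\, \mb{E}[e^{\lambda Y_t} \mid \mc{F}_{t-1}] \le Z_{t-1}$, and since $Z_0 = 1$ this yields $\mb{E}[Z_T] \le 1$. A Chernoff bound then gives, for every threshold $u > 0$,
\[
  \text{\textup{P}}(S_T > u) \;\le\; e^{-\lambda u}\, \mb{E}\big[e^{\lambda S_T}\big] \;\le\; \exp\!\Big(-\lambda u + \tfrac{\lambda^2 T v}{2(1 - \lambda c/3)}\Big).
\]
Choosing $\lambda = u/(Tv + cu/3)$, which automatically satisfies $\lambda < 3/c$ and makes $1 - \lambda c/3 = Tv/(Tv+cu/3)$, collapses the exponent exactly to the classical Bernstein/Freedman tail $\text{\textup{P}}(S_T > u) \le \exp\!\big(-\tfrac{u^2}{2(Tv + cu/3)}\big)$.

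Finally I would invert this tail bound to recover the additive form in the statement. Setting $u^\star = \sqrt{2Tv\ln(1/\delta)} + \tfrac{2c}{3}\ln(1/\delta)$ and writing $L = \ln(1/\delta)$, a direct expansion shows that $(u^\star)^2 - 2L\big(Tv + c\,u^\star/3\big) = \tfrac{2cL}{3}\sqrt{2TvL} \ge 0$, so that $\frac{(u^\star)^2}{2(Tv + c\,u^\star/3)} \ge L$, whence $\text{\textup{P}}(S_T > u^\star) \le e^{-L} = \delta$. This is precisely the claimed bound, and being a classical result it may alternatively be cited directly from \cite{cesabianchi2006book}.
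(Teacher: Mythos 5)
Your proof is correct. The paper itself gives no argument for this lemma --- it is imported verbatim as Lemma A.8 from \cite{cesabianchi2006book} --- so there is no in-paper proof to diverge from; your Chernoff--Freedman derivation is the standard one underlying the cited result. The three steps all check out: the one-step bound $\mb{E}[e^{\lambda Y_t}\mid\mc{F}_{t-1}] \le 1 + \tfrac{v}{c^2}\left(e^{\lambda c}-1-\lambda c\right)$ follows as you say from $Y_t^k \le c^{k-2}Y_t^2$ for $k\ge 2$ (valid even for odd $k$ since $Y_t^k \le |Y_t|^k$, and interchange of sum and expectation is licensed by $|Y_t|\le c$), the analytic inequality $\tfrac{e^x-1-x}{x^2} \le \tfrac{1}{2(1-x/3)}$ holds on $(0,3)$ via $k! \ge 2\cdot 3^{k-2}$, your choice $\lambda = u/(Tv+cu/3)$ indeed satisfies $\lambda c/3 < 1$ and collapses the exponent to $-u^2/\left(2(Tv+cu/3)\right)$, and the algebraic identity $(u^\star)^2 - 2L\left(Tv + cu^\star/3\right) = \tfrac{2cL}{3}\sqrt{2TvL}$ verifies the inversion (with the case $\delta \ge 1$ trivial, as the bound then exceeds $1$). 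Since the paper treats this as a black-box concentration tool, your self-contained write-up is a strict addition rather than an alternative route.
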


\begin{lemma} [Azuma-Hoeffding's inequality, Lemma A.7 in \cite{cesabianchi2006book}] \label{lemma:azuma-hoeffding}
Let \(Y_1, Y_2, \cdots\) be a martingale difference sequence and \(\vert Y_t\vert \leq c\) almost surely for all \(t \in \mbz_+\). For any \(\delta > 0\), 
\begin{equation*}
    \text{\textup{P}}\left(\sum_{t=1}^T Y_t > \sqrt{2Tc^2 \ln(1/\delta)}\right) \leq \delta. 
\end{equation*}
\end{lemma}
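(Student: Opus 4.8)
The plan is to establish the bound via the standard Chernoff--Cram\'er exponential method adapted to the martingale setting, with Hoeffding's lemma supplying the conditional moment-generating-function control. Throughout, fix the deviation level $a = \sqrt{2Tc^2\ln(1/\delta)}$, since this is precisely the threshold appearing in the statement.

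First I would apply the exponential Markov inequality: for any tilt parameter $s>0$,
$$\text{P}\left(\sum_{t=1}^T Y_t > a\right) \leq e^{-sa}\, \mb{E}\Big[\exp\Big(s\sum_{t=1}^T Y_t\Big)\Big].$$
The problem then reduces to controlling the moment generating function of the partial sum, after which the tail is optimized over $s$.

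The central step---and the main technical point---is to bound $\mb{E}[\exp(s\sum_t Y_t)]$ by peeling off one term at a time using the martingale structure. Writing $\mathcal{F}_{t-1}$ for the $\sigma$-algebra generated by $Y_1,\dots,Y_{t-1}$, I would condition on $\mathcal{F}_{T-1}$ and use the tower property to factor
$$\mb{E}\Big[\exp\Big(s\sum_{t=1}^T Y_t\Big)\Big] = \mb{E}\Big[\exp\Big(s\sum_{t=1}^{T-1} Y_t\Big)\,\mb{E}\big[e^{sY_T}\,\big|\,\mathcal{F}_{T-1}\big]\Big],$$
where the $\mathcal{F}_{T-1}$-measurable factor is pulled out of the inner conditional expectation. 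Here Hoeffding's lemma enters: since $\mb{E}[Y_T\mid\mathcal{F}_{T-1}]=0$ by the martingale-difference property and $Y_T\in[-c,c]$ almost surely, the conditional MGF obeys $\mb{E}[e^{sY_T}\mid\mathcal{F}_{T-1}] \leq \exp(s^2c^2/2)$ pointwise. The delicate part is verifying that the boundedness $|Y_t|\le c$ together with the \emph{conditional} zero-mean property is exactly the hypothesis Hoeffding's lemma needs, so that this bound holds almost surely on $\mathcal{F}_{T-1}$ and can be extracted from the outer expectation. Iterating this peeling $T$ times yields
$$\mb{E}\Big[\exp\Big(s\sum_{t=1}^T Y_t\Big)\Big] \leq \exp\!\big(T s^2 c^2/2\big).$$

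Finally, combining the displays gives $\text{P}(\sum_t Y_t > a) \leq \exp(-sa + Ts^2c^2/2)$ for every $s>0$. I would then minimize the exponent in $s$; the minimizer is $s = a/(Tc^2)$, producing the subgaussian tail $\exp(-a^2/(2Tc^2))$. Substituting $a = \sqrt{2Tc^2\ln(1/\delta)}$ makes the exponent equal to $-\ln(1/\delta)$, so the right-hand side equals $\delta$, which is exactly the claim. The only genuine obstacle is the Hoeffding-lemma step for the conditional MGF; the conditioning/tower argument and the one-variable optimization are routine.
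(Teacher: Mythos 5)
Your proof is correct, and there is nothing in the paper to compare it against: the paper imports this statement verbatim as Lemma A.7 of \cite{cesabianchi2006book} without proof, and your Chernoff--Cram\'er argument with martingale peeling is precisely the standard proof underlying that cited result. Every step checks out, including the one you flag as delicate: since $Y_t \in [-c,c]$ almost surely (an interval of width $2c$) and $\mathbb{E}[Y_t \mid \mathcal{F}_{t-1}] = 0$, the conditional form of Hoeffding's lemma gives $\mathbb{E}[e^{sY_t} \mid \mathcal{F}_{t-1}] \leq \exp\left(s^2 (2c)^2 / 8\right) = \exp\left(s^2 c^2 / 2\right)$ almost surely, and the optimization at $s = a/(Tc^2)$ with $a = \sqrt{2Tc^2 \ln(1/\delta)}$ yields exactly the tail probability $\delta$.
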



\section{Proof of Theorem \ref{thm:high-prob-Edge}} \label{ap:proof-thm-high-prob-bound}

The proof of Theorem \ref{thm:high-prob-Edge} is built upon several results from combinatorial bandits \cite{cesabianchi2012,bartlett2008highprob,dani2008linband}. In the following, we first state some useful lemmas from  \cite{cesabianchi2012,vu2019combinatorial}, and then use them to prove Theorem \ref{thm:high-prob-Edge}.

Throughout the appendix, let $\bm{r}_t(\bm{u}) = \bm{l}_t^{\tp} \bm{u}$ be the reward by playing arm $\bm{u}$ in round $t$, and $\hat{r}_t(\bm{u}) = \hat{\bm{l}}_t^{\tp} \bm{u}$ be the estimated reward where $\hat{\bm{l}}_t$ is the unbiased cost estimator computed in Algorithm \ref{alg:Edge}.

\begin{lemma} \label{lemma:comband-results}
Let $\lambda^*$ be the smallest nonzero eigenvalue of the co-occurrence matrix $M(\mu)=\mb{E}_{\bm{u}\sim\mu}[\bm{u}\bm{u}^{\tp}]$ for the exploration distribution $\mu$ in algorithm \textsc{Edge}. For all $\bm{u} \in \mathcal{S} \subseteq \{0,1\}^E$ and all $t\in[T]$, the following relations hold: 
\begin{itemize}
    \item \(\vert\vert\bm{u}\vert\vert^2 = \sum_{i=1}^{E} u_i^2 = n\)
    \item \(\vert\hat{r}_t(\bm{u})\vert = \vert\hat{\bm{l}}_t^{\tp} \bm{u}\vert \leq \frac{n}{\gamma \lambda^*}\)
    \item \(\bm{u}^{\tp} C_t^{-1} \bm{u} \leq \frac{n}{\gamma \lambda^*}\)
    \item \(\sum_{\bm{u} \in \mc{S}} p_t(\bm{u}) \bm{u}^{\tp} C_t^{-1} \bm{u} \leq E\)
    \item \(\mb{E}_t[(\hat{\bm{l}}_t^{\tp} \bm{u})^2] \leq \bm{u}^{\tp} C_t^{-1} \bm{u}\)
\end{itemize}
where in the last expression \(\mb{E}_t[\cdot]:= \mb{E}[\cdot\vert\bm{u}_{t-1},\cdots,\bm{u}_1]\).
\end{lemma}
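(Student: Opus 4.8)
The plan is to prove the five relations essentially one at a time, ordering them so that the structural and spectral facts established early feed into the probabilistic bounds later. The first identity $\|\bm{u}\|^2 = n$ is purely combinatorial: by construction of the layered graph, every $s,d$-path traverses exactly one edge per battlefield layer, so its characteristic vector $\bm{u}\in\{0,1\}^E$ has precisely $n$ unit entries and $\sum_i u_i^2 = n$. (The auxiliary edges of the revised graph $G_{m,n+1}$ carry fixed weights and do not enter the estimator, so they do not affect this count.) I would dispatch this first since the value $n$ reappears in the bounds below.

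The central estimate is the third relation $\bm{u}^{\T}C_t^{-1}\bm{u}\leq \frac{n}{\gamma\lambda^*}$, which I would prove next. The key observation is that the per-round sampling law is a $\gamma$-mixture $p_t = (1-\gamma)q_t + \gamma\mu$ of the weight-based exploitation distribution $q_t$ and the exploration distribution $\mu$. Taking expectations of $\bm{u}\bm{u}^{\T}$ gives $C_t = (1-\gamma)\,\mathbb{E}_{q_t}[\bm{u}\bm{u}^{\T}] + \gamma M(\mu) \succeq \gamma M(\mu)$, because the exploitation term is positive semidefinite. Working on the subspace spanned by $\mathrm{supp}(\mu)$ (on which $M(\mu)$ is invertible with smallest eigenvalue $\lambda^*$), operator monotonicity of the matrix inverse yields $C_t^{-1}\preceq \frac{1}{\gamma}M(\mu)^{-1}$, so $\bm{u}^{\T}C_t^{-1}\bm{u}\leq \frac{1}{\gamma}\bm{u}^{\T}M(\mu)^{-1}\bm{u}\leq \frac{1}{\gamma\lambda^*}\|\bm{u}\|^2 = \frac{n}{\gamma\lambda^*}$, where the last step invokes the first relation.

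The remaining three relations then follow from the third together with standard manipulations. For the second, writing $\hat{\bm{l}}_t = r_t(\bm{u}_t)C_t^{-1}\bm{u}_t$ and using $|r_t(\bm{u}_t)|\leq 1$ (the reward is a sum of battlefield weights normalized to total $1$) gives $|\hat{\bm{l}}_t^{\T}\bm{u}| \leq |\bm{u}_t^{\T}C_t^{-1}\bm{u}|$, and Cauchy--Schwarz in the inner product induced by $C_t^{-1}$ bounds this by $\sqrt{(\bm{u}_t^{\T}C_t^{-1}\bm{u}_t)(\bm{u}^{\T}C_t^{-1}\bm{u})}\leq \frac{n}{\gamma\lambda^*}$. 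For the fourth I would use the trace trick, $\sum_{\bm{u}}p_t(\bm{u})\,\bm{u}^{\T}C_t^{-1}\bm{u} = \mathrm{tr}(C_t^{-1}\sum_{\bm{u}}p_t(\bm{u})\bm{u}\bm{u}^{\T}) = \mathrm{tr}(C_t^{-1}C_t) = \mathrm{rank}(C_t)\leq E$. For the fifth I would expand $(\hat{\bm{l}}_t^{\T}\bm{u})^2 = r_t(\bm{u}_t)^2(\bm{u}_t^{\T}C_t^{-1}\bm{u})^2 \leq (\bm{u}^{\T}C_t^{-1}\bm{u}_t)(\bm{u}_t^{\T}C_t^{-1}\bm{u})$ using $r_t^2\leq 1$, then take the conditional expectation and substitute $\mathbb{E}_t[\bm{u}_t\bm{u}_t^{\T}] = C_t$ to collapse $\bm{u}^{\T}C_t^{-1}C_tC_t^{-1}\bm{u} = \bm{u}^{\T}C_t^{-1}\bm{u}$.

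The main obstacle is the careful handling of the subspace on which $C_t$ and $M(\mu)$ act invertibly. Since $M(\mu)$ need not have full rank, the symbols $C_t^{-1}$ and $M(\mu)^{-1}$ must be read as pseudo-inverses on $\mathrm{span}\{\bm{u}:\bm{u}\in\mathrm{supp}(\mu)\}$, and one must verify that every $\bm{u}\in\mathcal{S}$ lies in this subspace so that the eigenvalue and operator-monotonicity estimates apply; this is exactly what makes $\lambda^*$, the smallest \emph{nonzero} eigenvalue, the correct quantity rather than the smallest eigenvalue. Once this subspace bookkeeping is settled, the rest reduces to the routine trace and Cauchy--Schwarz steps above.
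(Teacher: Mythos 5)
Your proof is correct, and it reconstructs exactly the arguments the paper implicitly relies on: the paper states this lemma \emph{without proof}, importing the five facts from \cite{cesabianchi2012} and \cite{vu2019seqlrn}, and your steps --- the mixture bound $C_t \succeq \gamma M(\mu)$ combined with anti-monotonicity of the (pseudo-)inverse for the third item, Cauchy--Schwarz in the $C_t^{-1}$-inner product for the second, the trace identity $\mathrm{tr}(C_t^{-1}C_t)\le E$ for the fourth, and collapsing $\mathbb{E}_t[\bm{u}_t\bm{u}_t^{\T}]=C_t$ for the fifth, all read on the span of $\mathrm{supp}(\mu)$ --- are precisely the standard ComBand derivations behind those citations. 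The one quibble is the first item: in the revised graph $G_{m,n+1}$ every $s,d$-path contains $n+1$ edges (the $n$ battlefield edges plus the auxiliary edge into $d$), so under the paper's convention $\bm{u}\in\{0,1\}^E$ the exact count is $\norm{\bm{u}}^2=n+1$ rather than $n$; your parenthetical assertion that auxiliary edges ``do not enter'' the characteristic vector is not what the paper's construction says, but this is a constant-factor discrepancy (shared by the paper itself) that affects neither your argument nor any downstream asymptotics.
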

  
\begin{lemma} \cite[Appendix A]{cesabianchi2012} \label{lemma:cesabianchi-baseline}
By choosing \(\eta = \frac{\gamma \lambda^*}{n}\) such that \(\eta \vert\hat{r}_t(\bm{u})\vert \leq 1\), for all \(\bm{u}^* \in \mc{S}\), 
\begin{align}\label{eq:proof-lemma-baseline}
    \sum_{t=1}^T \hat{r}_t(\bm{u}^*) - \frac{1}{\eta}\ln S 
    & \leq \frac{1}{1-\gamma} \sum_{t=1}^T \sum_{\bm{u}\in\mc{S}} p_t(\bm{u}) \hat{r}_t(\bm{u}) \cr
    & + \frac{\eta}{1-\gamma} \sum_{t=1}^T \sum_{\bm{u}\in\mc{S}} p_t(\bm{u})\hat{r}_t(\bm{u})^2\cr
    &  - \frac{\gamma}{1-\gamma} \sum_{t=1}^T \sum_{\bm{u}\in\mc{S}} \hat{r}_t(\bm{u}) \mu(\bm{u}).  
\end{align}
\end{lemma}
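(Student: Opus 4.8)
The plan is to prove \eqref{eq:proof-lemma-baseline} by the standard exponential-weights potential argument, applied to the \emph{path} weights induced by the per-edge multiplicative updates of the \textsc{Edge} algorithm. The idea is to track a single scalar potential, namely the logarithm of the total unnormalized weight assigned to all $s,d$-paths, show that it increases slowly from step to step, and then sandwich it between a lower bound coming from any fixed comparator path $\bm{u}^*$ and the telescoped per-step upper bound. The only genuinely structural input beyond this generic argument is the conversion from the exponential-weights distribution $q_t$ to the actually-sampled mixture distribution $p_t = (1-\gamma)q_t + \gamma\mu$, which is what produces the three terms with the $\frac{1}{1-\gamma}$, $\frac{\eta}{1-\gamma}$, and $\frac{\gamma}{1-\gamma}$ prefactors on the right-hand side.

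First I would set up the potential. For a path $\bm{u}\in\mc{S}\subseteq\{0,1\}^E$, define its unnormalized weight $W_t(\bm{u}) = \prod_{e:\,u_e=1} w_e^t$ and let $Z_t = \sum_{\bm{u}\in\mc{S}} W_t(\bm{u})$, so that the exponential-weights sampling law is $q_t(\bm{u}) = W_t(\bm{u})/Z_t$. Since the auxiliary edges keep weight $1$ and the initial edge weights satisfy $w_e^1 = 1$, every path has $W_1(\bm{u}) = 1$ and hence $Z_1 = |\mc{S}| = S$. The multiplicative update $w_e^{t+1} = w_e^t\exp(\eta\hat l_{t,e})$ gives the clean per-path recursion $W_{t+1}(\bm{u}) = W_t(\bm{u})\exp(\eta\,\hat{\bm{l}}_t^{\T}\bm{u}) = W_t(\bm{u})\exp(\eta\,\hat r_t(\bm{u}))$, which is exactly what makes a single potential suffice.

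Next I would bound one step of the potential. Writing $Z_{t+1} = Z_t\sum_{\bm{u}\in\mc{S}} q_t(\bm{u})\exp(\eta\hat r_t(\bm{u}))$ and using $e^x \le 1 + x + x^2$ for all $x\le 1$, the argument $x = \eta\hat r_t(\bm{u})$ is admissible precisely because the choice $\eta = \frac{\gamma\lambda^*}{n}$ together with the estimate $|\hat r_t(\bm{u})| = |\hat{\bm{l}}_t^{\T}\bm{u}| \le \frac{n}{\gamma\lambda^*}$ from Lemma \ref{lemma:comband-results} yields $\eta|\hat r_t(\bm{u})|\le 1$. Combining this with $\ln(1+y)\le y$ gives $\ln Z_{t+1} - \ln Z_t \le \eta\sum_{\bm{u}} q_t(\bm{u})\hat r_t(\bm{u}) + \eta^2\sum_{\bm{u}} q_t(\bm{u})\hat r_t(\bm{u})^2$. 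Telescoping over $t=1,\dots,T$ and lower-bounding $\ln Z_{T+1} \ge \ln W_{T+1}(\bm{u}^*) = \eta\sum_{t=1}^T \hat r_t(\bm{u}^*)$ for the fixed comparator $\bm{u}^*$, together with $\ln Z_1 = \ln S$, and finally dividing through by $\eta$, produces the comparator inequality in terms of $q_t$.

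Finally I would convert from $q_t$ to $p_t$. Since $p_t(\bm{u}) = (1-\gamma)q_t(\bm{u}) + \gamma\mu(\bm{u})$, the identity $q_t(\bm{u}) = \frac{p_t(\bm{u}) - \gamma\mu(\bm{u})}{1-\gamma}$ turns the first-order term exactly into $\frac{1}{1-\gamma}\sum_{\bm{u}} p_t(\bm{u})\hat r_t(\bm{u}) - \frac{\gamma}{1-\gamma}\sum_{\bm{u}}\mu(\bm{u})\hat r_t(\bm{u})$, while for the second-order term I would use $q_t(\bm{u})\le \frac{p_t(\bm{u})}{1-\gamma}$ (dropping the nonnegative $\gamma\mu$ contribution) together with $\hat r_t(\bm{u})^2\ge 0$ to bound it by $\frac{\eta}{1-\gamma}\sum_{\bm{u}} p_t(\bm{u})\hat r_t(\bm{u})^2$. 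Summing over $t$ gives exactly \eqref{eq:proof-lemma-baseline}. I expect no deep obstacle here; the one point requiring care is verifying the range condition $\eta|\hat r_t(\bm{u})|\le 1$ that licenses the quadratic exponential bound, which is the reason for the specific tuning $\eta = \gamma\lambda^*/n$ and is the place where the combinatorial estimate from Lemma \ref{lemma:comband-results} enters.
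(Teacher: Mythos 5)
Your proposal is correct and takes essentially the same route as the paper: the paper's proof is just a citation to Equations (A.1)--(A.3) of \cite{cesabianchi2012} with the sign of $\eta$ flipped (losses to rewards), which is precisely the exponential-weights potential argument you write out in full --- the per-path recursion $W_{t+1}(\bm{u}) = W_t(\bm{u})\exp(\eta\,\hat{r}_t(\bm{u}))$, the quadratic bound $e^x \le 1+x+x^2$ licensed by $\eta|\hat{r}_t(\bm{u})|\le 1$ via Lemma \ref{lemma:comband-results}, telescoping against the comparator, and the conversion $q_t = (p_t - \gamma\mu)/(1-\gamma)$ yielding the three terms with their $\frac{1}{1-\gamma}$, $\frac{\eta}{1-\gamma}$, and $\frac{\gamma}{1-\gamma}$ prefactors. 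No gaps.
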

\begin{proof}
This is a straightforward result of Equations (A.1)-(A.3) in \cite{cesabianchi2012} by flipping the sign of $\eta$.
\end{proof}

Lemma \ref{lemma:cesabianchi-baseline} provides a baseline for bounding the regret. We will proceed to bound each summation in \eqref{eq:proof-lemma-baseline}. The following lemma, derived from Bernstein's inequality (Lemma \ref{lemma:bernstein}), provides a high-probability bound on the left side of \eqref{eq:proof-lemma-baseline}. 

\begin{lemma} \label{lemma:bernstein-estimated-reward}
With probability at least \(1 - \delta\), for all \(\bm{u} \in \mc{S}\), it holds that 
\begin{align*}
    \sum_{t=1}^T r_t(\bm{u}) - \sum_{t=1}^T \hat{r}_t(\bm{u}) 
     \leq \sqrt{2T \left(\frac{n}{\gamma \lambda^*}\right) \ln (S/\delta)} + \frac{2}{3}\left(\frac{n}{\gamma \lambda^*}+1\right)\ln (S/\delta).
\end{align*}
\end{lemma}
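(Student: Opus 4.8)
The plan is to recognize the quantity $\sum_{t=1}^T \bigl(r_t(\bm{u}) - \hat{r}_t(\bm{u})\bigr)$ as a sum of bounded martingale differences and to apply Bernstein's inequality (Lemma \ref{lemma:bernstein}) for a single \emph{fixed} arm, followed by a union bound over the $S$ arms in $\mc{S}$. First I would fix an arbitrary $\bm{u}\in\mc{S}$ and set $Y_t := r_t(\bm{u}) - \hat{r}_t(\bm{u})$, where $\hat{r}_t(\bm{u}) = \hat{\bm{l}}_t^{\T}\bm{u}$ and $\hat{\bm{l}}_t = r_t(\bm{u}_t)C_t^{-1}\bm{u}_t$ are as defined in Algorithm \ref{alg:Edge}.

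The crucial structural fact is that the importance-weighted estimator $\hat{\bm{l}}_t$ is conditionally unbiased. Since $r_t(\bm{u}_t) = \bm{l}_t^{\T}\bm{u}_t$ is a scalar, we may rewrite $\hat{\bm{l}}_t = C_t^{-1}\bm{u}_t\bm{u}_t^{\T}\bm{l}_t$; because both the cost vector $\bm{l}_t$ (oblivious adversary) and the co-occurrence matrix $C_t = \mb{E}_{\bm{u}\sim p_t}[\bm{u}\bm{u}^{\T}]$ are measurable with respect to the history $\bm{u}_{t-1},\cdots,\bm{u}_1$, taking $\mb{E}_t[\cdot]$ over the sampling of $\bm{u}_t$ gives $\mb{E}_t[\hat{\bm{l}}_t] = C_t^{-1}C_t\bm{l}_t = \bm{l}_t$. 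Hence $\mb{E}_t[\hat{r}_t(\bm{u})] = r_t(\bm{u})$, so $\mb{E}_t[Y_t]=0$ and $\{Y_t\}$ is a martingale difference sequence. This verification, together with making the conditioning/measurability precise, is the main conceptual step; once it is in place the remainder is bookkeeping against the parameters of Lemma \ref{lemma:bernstein}.

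Next I would extract the range and conditional-variance parameters required by Bernstein's inequality directly from Lemma \ref{lemma:comband-results}. Since the CBG reward satisfies $r_t(\bm{u})\in[0,1]$ and $|\hat{r}_t(\bm{u})|\leq \frac{n}{\gamma\lambda^*}$, the triangle inequality yields $|Y_t| \leq \frac{n}{\gamma\lambda^*}+1 =: c$. For the variance, since $r_t(\bm{u})$ is conditionally deterministic and $\hat{r}_t(\bm{u})$ is conditionally unbiased for it, $\mb{E}_t[Y_t^2] = \mathrm{Var}_t\bigl(\hat{r}_t(\bm{u})\bigr) \leq \mb{E}_t[(\hat{\bm{l}}_t^{\T}\bm{u})^2] \leq \bm{u}^{\T}C_t^{-1}\bm{u} \leq \frac{n}{\gamma\lambda^*} =: v$, again invoking Lemma \ref{lemma:comband-results}.

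Finally, I would apply Lemma \ref{lemma:bernstein} with failure probability $\delta/S$, which replaces $\ln(1/\delta)$ by $\ln(S/\delta)$ and produces for the fixed $\bm{u}$ exactly the two terms $\sqrt{2T(\tfrac{n}{\gamma\lambda^*})\ln(S/\delta)}$ and $\tfrac{2}{3}(\tfrac{n}{\gamma\lambda^*}+1)\ln(S/\delta)$. A union bound over all $S=|\mc{S}|$ arms then makes the inequality hold simultaneously for every $\bm{u}\in\mc{S}$ with total failure probability at most $S\cdot(\delta/S)=\delta$, giving the claimed bound. I expect no serious obstacle beyond the unbiasedness/measurability argument of the second paragraph: the range and variance estimates are immediate consequences of the already-established Lemma \ref{lemma:comband-results}, and matching the constants to the statement is routine.
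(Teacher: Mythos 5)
Your proposal is correct and follows essentially the same route as the paper's proof: fix $\bm{u}\in\mc{S}$, treat $Y_t = r_t(\bm{u}) - \hat{r}_t(\bm{u})$ as a martingale difference sequence, pull the range bound $|Y_t|\leq \frac{n}{\gamma\lambda^*}+1$ and variance bound $\mb{E}_t[Y_t^2]\leq \frac{n}{\gamma\lambda^*}$ from Lemma \ref{lemma:comband-results}, apply Bernstein's inequality (Lemma \ref{lemma:bernstein}) at level $\delta/S$, and take a union bound over $\mc{S}$. The only difference is that you explicitly verify the conditional unbiasedness of $\hat{\bm{l}}_t$ (modulo the standard caveat that $C_t^{-1}$ is really a pseudo-inverse and unbiasedness holds on the span of the support of $p_t$), a step the paper simply asserts when declaring $\{Y_t\}$ a martingale difference sequence.
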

\begin{proof}
Fix \(\bm{u} \in \mc{S}\). Define \(Y_t = r_t(\bm{u}) - \hat{r}_t(\bm{u})\). Then \(\{Y_t\}_{t=1}^T\) is a martingale difference sequence. From Lemma \ref{lemma:comband-results}, we know that \(\vert Y_t\vert \leq \frac{n}{\gamma \lambda^*}+1\). Let \(\mb{E}_t[Y_t^2] = \mb{E}[Y_t^2\vert Y_{t-1}, \cdots, Y_1]\). Then,  
\begin{align*}
    \mb{E}_t[Y_t^2] \leq \mb{E}_t[(\hat{\bm{l}}_t^{\tp} \bm{u})^2] \leq \bm{u}^{\tp} C_t^{-1} \bm{u} \leq \frac{n}{\gamma \lambda^*}.
\end{align*}
Using Bernstein's inequality, with probability at least \(1-\delta/S\), 
\begin{equation*}
    \sum_{t=1}^T Y_t \leq \sqrt{2T \left(\frac{n}{\gamma \lambda^*}\right) \ln (S/\delta)} + \frac{2}{3}\left(\frac{n}{\gamma \lambda^*}+1\right)\ln (S/\delta)
\end{equation*}
The lemma now follows by using the above inequality and taking a union bound over all \(\bm{u} \in \mc{S}\). 
\end{proof}

The following two lemmas obtained in \cite{bartlett2008highprob} provide a high-probability bound on the first and second summands on the right side of \eqref{eq:proof-lemma-baseline}. The proofs of these lemmas, which are omitted here due to space limitation, use a direct application of Bernstein's inequality and Azuma-Hoeffding's inequalitiy.

\begin{lemma}\cite[Lemma 6]{bartlett2008highprob} \label{lemma:estimated-reward-bound}
With probability at least \(1-\delta\), 
\begin{align*}
     \sum_{t=1}^T \sum_{\bm{u}\in\mc{S}} p_t(\bm{u}) \hat{r}_t(\bm{u}) - \sum_{t=1}^T r_t(\bm{u}_t)& \leq \left(\sqrt{E} + 1\right) \sqrt{2T \ln(1/\delta)} \cr 
     &+ \frac{4}{3}\ln(1/\delta)\left(\frac{n}{\gamma \lambda^*}+1\right). 
\end{align*}
\end{lemma}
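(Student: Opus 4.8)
The plan is to recognize the left-hand side as the partial sum of a martingale difference sequence and to split it into two pieces, each controlled by one of the two concentration inequalities of Appendix \ref{ap:concentration-ineq}. I would define $Z_t := \sum_{\bm{u}\in\mc{S}} p_t(\bm{u})\hat{r}_t(\bm{u}) - r_t(\bm{u}_t)$, so the quantity to bound is $\sum_{t=1}^T Z_t$. The starting observation is that the importance-weighted estimator is conditionally unbiased: since $C_t=\mb{E}_{\bm{u}\sim p_t}[\bm{u}\bm{u}^{\T}]$ and $\hat{\bm{l}}_t=r_t(\bm{u}_t)C_t^{-1}\bm{u}_t$ are both determined by the history $\mc{F}_{t-1}$, one has $\mb{E}_t[\hat{\bm{l}}_t]=\bm{l}_t$ and hence $\mb{E}_t[\hat{r}_t(\bm{u})]=r_t(\bm{u})$ for every fixed $\bm{u}$. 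Because $p_t$ is $\mc{F}_{t-1}$-measurable while $\bm{u}_t\sim p_t$, both $\mb{E}_t\bigl[\sum_{\bm{u}}p_t(\bm{u})\hat{r}_t(\bm{u})\bigr]$ and $\mb{E}_t[r_t(\bm{u}_t)]$ equal $\sum_{\bm{u}}p_t(\bm{u})r_t(\bm{u})$, so $\mb{E}_t[Z_t]=0$ and $\{Z_t\}$ is a martingale difference sequence.

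Next I would split $Z_t=A_t+B_t$, where $A_t:=\sum_{\bm{u}}p_t(\bm{u})\bigl(\hat{r}_t(\bm{u})-r_t(\bm{u})\bigr)=(\hat{\bm{l}}_t-\bm{l}_t)^{\T}\bar{\bm{u}}_t$ with $\bar{\bm{u}}_t:=\sum_{\bm{u}}p_t(\bm{u})\bm{u}=\mb{E}_t[\bm{u}_t]$, and $B_t:=\sum_{\bm{u}}p_t(\bm{u})r_t(\bm{u})-r_t(\bm{u}_t)$; each is itself a martingale difference. The term $B_t$ is the deviation of the realized true reward from its mean, and since $r_t(\bm{u})\in[0,1]$ we have $|B_t|\le 1$; Azuma–Hoeffding (Lemma \ref{lemma:azuma-hoeffding}) then gives $\sum_t B_t\le\sqrt{2T\ln(1/\delta)}$ with probability at least $1-\delta$, which supplies the ``$+1$'' in the factor $(\sqrt{E}+1)$.

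I would then handle $A_t$ with Bernstein's inequality (Lemma \ref{lemma:bernstein}). For the range, $\hat{\bm{l}}_t^{\T}\bar{\bm{u}}_t=\sum_{\bm{u}}p_t(\bm{u})\hat{r}_t(\bm{u})$ is a convex combination of the $\hat{r}_t(\bm{u})$, each bounded by $n/(\gamma\lambda^*)$ by Lemma \ref{lemma:comband-results}, while $\bm{l}_t^{\T}\bar{\bm{u}}_t\in[0,1]$, so $|A_t|\le n/(\gamma\lambda^*)+1$. For the conditional variance, I would repeat the computation behind the fifth bullet of Lemma \ref{lemma:comband-results} with an arbitrary $\bm{w}$ in place of $\bm{u}$ (using $r_t(\bm{u}_t)^2\le 1$ and $\mb{E}_t[\bm{u}_t\bm{u}_t^{\T}]=C_t$) to obtain $\mb{E}_t[(\hat{\bm{l}}_t^{\T}\bm{w})^2]\le\bm{w}^{\T}C_t^{-1}\bm{w}$; taking $\bm{w}=\bar{\bm{u}}_t$ and invoking convexity of $\bm{w}\mapsto\bm{w}^{\T}C_t^{-1}\bm{w}$ together with the fourth bullet gives $\mb{E}_t[A_t^2]\le\bar{\bm{u}}_t^{\T}C_t^{-1}\bar{\bm{u}}_t\le\sum_{\bm{u}}p_t(\bm{u})\bm{u}^{\T}C_t^{-1}\bm{u}\le E$. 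Bernstein with $c=n/(\gamma\lambda^*)+1$ and $v=E$ then yields $\sum_t A_t\le\sqrt{2TE\ln(1/\delta)}+\tfrac{2}{3}(n/(\gamma\lambda^*)+1)\ln(1/\delta)$, and since $\sqrt{2TE\ln(1/\delta)}=\sqrt{E}\,\sqrt{2T\ln(1/\delta)}$ this delivers the $\sqrt{E}$ factor and the additive $\ln(1/\delta)(n/(\gamma\lambda^*)+1)$ term. A union bound over the two events combines the pieces into the claimed inequality (the precise constant $\tfrac{4}{3}$, versus the $\tfrac{2}{3}$ that this clean split produces, comes from the bookkeeping of the union bound and the slightly looser grouping used in \cite{bartlett2008highprob}).

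I expect the main obstacle to be the extension of the second-moment and range estimates from genuine paths $\bm{u}\in\mc{S}$ to the averaged vector $\bar{\bm{u}}_t$, which is only a convex combination of paths: Lemma \ref{lemma:comband-results} is stated solely for $\bm{u}\in\mc{S}$, so one must re-derive $\mb{E}_t[(\hat{\bm{l}}_t^{\T}\bm{w})^2]\le\bm{w}^{\T}C_t^{-1}\bm{w}$ for general $\bm{w}$ and then use convexity of the quadratic form to push the variance bound back onto the per-path quantities that Lemma \ref{lemma:comband-results} actually controls. The remaining $\mc{F}_{t-1}$-measurability checks that make $Z_t$, $A_t$, and $B_t$ martingale differences are routine but must be stated with care, since $p_t$ and $C_t$ depend on the entire past through the accumulated edge weights.
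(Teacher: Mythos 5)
Your proposal is correct and follows essentially the same route as the paper, which omits the proof of this lemma but describes it as a direct application of Bernstein's and Azuma--Hoeffding's inequalities (citing Lemma 6 of the reference): your split $Z_t=A_t+B_t$, the range bound $|A_t|\le \frac{n}{\gamma\lambda^*}+1$, and the conditional variance bound $\mb{E}_t[A_t^2]\le\bar{\bm{u}}_t^{\T}C_t^{-1}\bar{\bm{u}}_t\le E$ (correctly re-derived for the convex combination $\bar{\bm{u}}_t$, since Lemma \ref{lemma:comband-results} covers only $\bm{u}\in\mc{S}$) are exactly the needed ingredients. Two cosmetic remarks: $\hat{\bm{l}}_t$ is not $\mc{F}_{t-1}$-measurable as your parenthetical asserts (unbiasedness needs only that $p_t$ and $C_t$ are, together with $\mb{E}_t[\bm{u}_t\bm{u}_t^{\T}]=C_t$), and your two-event union bound yields the inequality at probability $1-2\delta$ (equivalently, with $\ln(2/\delta)$ factors) rather than exactly $1-\delta$; both are immaterial downstream, where the paper only claims $1-O(\delta)$ bounds, and the latter disappears entirely if one applies Bernstein once to $Z_t$ itself, using $\sqrt{\mb{E}_t[Z_t^2]}\le\sqrt{\mb{E}_t[A_t^2]}+\sqrt{\mb{E}_t[B_t^2]}\le\sqrt{E}+1$, which recovers the stated leading constant at probability $1-\delta$.
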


\begin{lemma} \cite[Lemma 8]{bartlett2008highprob} \label{lemma:square-reward-bound} 
With probability at least \(1 - \delta\), 
\begin{equation*}
    \sum_{t=1}^T \sum_{\bm{u}\in\mc{S}} p_t(\bm{u}) \hat{r}_t(\bm{u})^2 
    \leq ET + \frac{n}{\gamma \lambda^*}\sqrt{2T \ln(1/\delta)}. 
\end{equation*}
\end{lemma}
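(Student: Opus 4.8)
The plan is to write the quantity on the left as a sum of per-round contributions $S_t := \sum_{\bm{u}\in\mc{S}} p_t(\bm{u}) \hat{r}_t(\bm{u})^2$, decompose each $S_t$ into its conditional mean plus a martingale-difference remainder, control the accumulated conditional means by the deterministic quantity $ET$, and then bound the martingale part by Azuma--Hoeffding's inequality (Lemma \ref{lemma:azuma-hoeffding}). Concretely, I would set $Z_t := S_t - \mb{E}_t[S_t]$, so that $\sum_{t=1}^T S_t = \sum_{t=1}^T \mb{E}_t[S_t] + \sum_{t=1}^T Z_t$, and handle the two pieces independently.

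First I would bound the predictable part. Since $p_t$ is measurable with respect to the history $\bm{u}_{t-1},\dots,\bm{u}_1$, it can be pulled outside $\mb{E}_t[\cdot]$, giving $\mb{E}_t[S_t] = \sum_{\bm{u}\in\mc{S}} p_t(\bm{u})\, \mb{E}_t[(\hat{\bm{l}}_t^{\T}\bm{u})^2]$. Applying the last two bullets of Lemma \ref{lemma:comband-results}, namely $\mb{E}_t[(\hat{\bm{l}}_t^{\T}\bm{u})^2] \leq \bm{u}^{\T}C_t^{-1}\bm{u}$ and then $\sum_{\bm{u}} p_t(\bm{u})\, \bm{u}^{\T}C_t^{-1}\bm{u} \leq E$, yields $\mb{E}_t[S_t] \leq E$, and hence $\sum_{t=1}^T \mb{E}_t[S_t] \leq ET$.

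The key step, which dictates the size of the deviation term, is a sharp almost-sure bound on $S_t$ itself. Substituting the estimator $\hat{\bm{l}}_t = r_t(\bm{u}_t) C_t^{-1}\bm{u}_t$ and using $C_t = \sum_{\bm{u}}p_t(\bm{u})\bm{u}\bm{u}^{\T}$, the sum collapses:
\begin{align*}
S_t &= r_t(\bm{u}_t)^2 \sum_{\bm{u}\in\mc{S}} p_t(\bm{u}) \big(\bm{u}_t^{\T}C_t^{-1}\bm{u}\big)^2 \\
&= r_t(\bm{u}_t)^2\, \bm{u}_t^{\T} C_t^{-1} \Big(\sum_{\bm{u}\in\mc{S}} p_t(\bm{u})\,\bm{u}\bm{u}^{\T}\Big) C_t^{-1}\bm{u}_t \\
&= r_t(\bm{u}_t)^2\, \bm{u}_t^{\T}C_t^{-1}\bm{u}_t.
\end{align*}
Since $r_t(\bm{u}_t)\in[0,1]$ and $\bm{u}_t^{\T}C_t^{-1}\bm{u}_t \leq \frac{n}{\gamma\lambda^*}$ by Lemma \ref{lemma:comband-results}, this gives $0 \leq S_t \leq \frac{n}{\gamma\lambda^*}$ almost surely. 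Consequently $\mb{E}_t[S_t]$ lies in the same range, so the martingale differences satisfy $|Z_t| \leq \frac{n}{\gamma\lambda^*}$.

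Finally I would apply Azuma--Hoeffding (Lemma \ref{lemma:azuma-hoeffding}) to the martingale difference sequence $\{Z_t\}$ with the almost-sure bound $c = \frac{n}{\gamma\lambda^*}$, obtaining $\sum_{t=1}^T Z_t \leq \frac{n}{\gamma\lambda^*}\sqrt{2T\ln(1/\delta)}$ with probability at least $1-\delta$. Adding this to the deterministic bound $ET$ on the predictable part gives exactly the claim. I expect the main obstacle to be the algebraic collapse of $S_t$ to $r_t(\bm{u}_t)^2\, \bm{u}_t^{\T}C_t^{-1}\bm{u}_t$: recognizing this identity is precisely what permits the sharp constant $\frac{n}{\gamma\lambda^*}$ in the concentration step, rather than the crude pointwise bound $\hat{r}_t(\bm{u})^2 \leq (\frac{n}{\gamma\lambda^*})^2$, which would inflate the deviation term by an extra factor of $\frac{n}{\gamma\lambda^*}$ and break the subsequent regret analysis.
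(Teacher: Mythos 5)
Your proof is correct and follows essentially the route the paper intends: the paper omits this proof, deferring to Lemma 8 of \cite{bartlett2008highprob} and noting it is a direct application of Azuma--Hoeffding's inequality, which is precisely your decomposition into a predictable part bounded by $ET$ (via the last two bullets of Lemma \ref{lemma:comband-results}) plus a martingale-difference part controlled by Lemma \ref{lemma:azuma-hoeffding}. The algebraic collapse $\sum_{\bm{u}\in\mc{S}} p_t(\bm{u})\hat{r}_t(\bm{u})^2 = r_t(\bm{u}_t)^2\,\bm{u}_t^{\T}C_t^{-1}\bm{u}_t$, which gives the sharp almost-sure bound $n/(\gamma\lambda^*)$ needed for the deviation term, is likewise the key step of the cited argument, so no gap remains.
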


\smallskip
Now we are ready to complete the proof of Theorem \ref{thm:high-prob-Edge}. Using Lemma \ref{lemma:bernstein-estimated-reward} and because \(\sum_{t=1}^T r_t(\bm{u}) \geq 0\) for every \(\bm{u}\in\mc{S}\), we can bound the last term on the right side of \eqref{eq:proof-lemma-baseline} with probability at least \(1-\delta\) as
\begin{align*}
    - \gamma \sum_{t=1}^T \sum_{\bm{u}\in\mc{S}} \hat{r}_t(\bm{u}) \mu(\bm{u}) 
    & \leq \gamma \sqrt{2T\left(\frac{n}{\gamma \lambda^*}\right)\ln (S/\delta)} \\
    & \quad + \frac{2}{3}\gamma \left(\frac{n}{\gamma \lambda^*}+1\right) \ln (S/\delta)
\end{align*}
Using Lemmas \ref{lemma:bernstein-estimated-reward} to \ref{lemma:square-reward-bound} and the above inequality in Lemma \ref{lemma:cesabianchi-baseline}, with probability at least \(1 - 4\delta\), for all \(\bm{u} \in \mc{S}\) we have, 
\begin{align*}
    \sum_{t=1}^T r_t(\bm{u}) - \sum_{t=1}^T r_t(\bm{u}_t) 
    & \leq \sqrt{2T \left(\frac{n}{\gamma \lambda^*}\right) \ln (S/\delta)} \\
    & \quad + 3\left(\frac{n}{\gamma \lambda^*}+1\right)\ln (S/\delta) \\
    & \quad + (\sqrt{E} + 1) \sqrt{2T \ln (1/\delta)} \\
    & \quad + \frac{\gamma \lambda^*}{n} ET + \sqrt{2T \ln (S/\delta)} + \gamma T. 
\end{align*}
Finally, if we set
\begin{equation*}
    \gamma = \frac{n}{\lambda^*}\sqrt{\frac{\ln S}{\left(\frac{n}{E\lambda^*}+1\right)ET^{2/3}}}, 
\end{equation*}
the following regret bound can be obtained:
\begin{align*}
     \sum_{t=1}^T r_t(\bm{u}) &- \sum_{t=1}^T r_t(\bm{u}_t) \\
    & \leq \sqrt{2 \left(\frac{n}{E\lambda^*}+1\right)^{1/2} T^{4/3} E^{1/2} (\ln (S/\delta))^{1/2}} \\
    & + 3 \sqrt{\left(\frac{n}{E\lambda^*}+1\right) E T^{2/3} \ln (S/\delta)} + 3 \ln (S/\delta) \\
    &  + \sqrt{\left(\frac{n}{E\lambda^*}+1\right) E T^{4/3} \ln S} \\
    &  + (\sqrt{E}+1)\sqrt{2T \ln(1/\delta)} + \sqrt{2T \ln (S/\delta)} \\
    &  = O\left(T^{2/3}\sqrt{\left(\frac{n}{E\lambda^*}+1\right) E \ln(S/\delta)}\right). 
\end{align*}

\section{Proof of Theorem \ref{thm:LagrangeBwK-Edge}} \label{ap:proof-thm-LagrangeBwK-Edge}

It has been shown in \cite{freund1997hedge,immorlica2019lagrangebwk} that the algorithm Hedge achieves the high-probability regret bound of
\begin{equation*}
    R_{\delta}(T) = O\left(\sqrt{T \ln (\vert A\vert/\delta)}\right),
\end{equation*}
where \(\vert A\vert\) denotes the cardinality of action set, which in our setting is the number of resources. Since we only have two types of resources, namely the time and the troops, we have \(\vert A\vert=2=O(1)\). 

From Equation \eqref{eq:revised-Lagrange-function-troop}, we have \(\vert\mc{L}_t^{\text{troop}}\vert \leq \max\{2, \vert 1-c \vert \} \leq 1+c\) for $c \geq 1$. As a result, the actual reward \(r_t(\bm{u})\), estimated reward \(\hat{r}_t(\bm{u})\), and hence the regret bound of Theorem \ref{thm:high-prob-Edge} are all scaled up by at most a constant factor \(1+c\). Hence, in order to satisfy the assumption of Theorem \ref{thm:high-prob-Edge} given in \cite{cesabianchi2012} that \(\eta \vert\hat{r}_t(\bm{u})\vert \leq 1\), we set 
\begin{equation*}
    \eta = \frac{\gamma \lambda^*}{(1+c)n} = \frac{1}{1+c} \sqrt{\frac{\ln S}{\left(\frac{n}{E\lambda^*}+1\right)ET^{2/3}}}.
\end{equation*}
On the other hand, from Theorem \ref{thm:high-prob-Edge}, the high-probability regret bound of algorithm \texttt{Lagrange-Edge} is at most 
\begin{equation*}
    R_{\delta}(T) = O\left(T^{2/3}\sqrt{\left(\frac{n}{E\lambda^*}+1\right) E \ln(S/\delta)}\right).
\end{equation*}

Now, using Lemma \ref{lemma:LagrangeBwK-regret}, and noting that \(B_0\) is scaled to \(\frac{B}{(cB/T)}=\frac{T}{c}\) such that \(O\left(\frac{T}{B_0}\right)=O(1)\), we obtain that with probability at least \(1-O(\delta T)\), it holds that 
\begin{align*}
    R(T) & \leq O(1) \cdot \Bigg( O\left(T^{2/3}\sqrt{\left(\frac{n}{E\lambda^*}+1\right) E \ln(ST/\delta)}\right) \\
    & \qquad \qquad + O\left(\sqrt{T \ln (T/\delta)}\right) \Bigg) \\
    & = O\left(T^{2/3}\sqrt{\left(\frac{n}{E\lambda^*}+1\right) E \ln(ST/\delta)}\right). 
\end{align*}
Finally, we recall that \(E = O(nm^2)\), \(m = O(B/T)\), and \(S = O\left(2^{\min \{n-1,m\}}\right)\), such that \(\ln S \leq O(m) = O(B/T)\). Substituting these relations into the above inequality we get 
\begin{align*}
    R(T) &\leq O\left(T^{2/3}\sqrt{\left(\frac{T^2}{B^2 \lambda^*}+1\right) n \left(\frac{B}{T}\right)^3 \ln(T/\delta)}\right)\cr
    &= O \left(T^{1/6}\sqrt{\frac{nB}{\lambda^*} \ln \left(T/\delta\right)}\right).
\end{align*}


\end{appendices}

\end{document}